%% file: acl_latex.tex
\documentclass[11pt]{article}

\PassOptionsToPackage{table}{xcolor}
\usepackage[final]{acl}

\usepackage{times}
\usepackage{latexsym}

\usepackage[T1]{fontenc}
\usepackage[utf8]{inputenc}

\usepackage{microtype}

\usepackage{inconsolata}

\usepackage{graphicx}
\usepackage{algorithm}
\usepackage{algpseudocode}
\usepackage{booktabs}
\usepackage{url}
\usepackage{makecell}
\usepackage{pifont,tikz}
\usepackage{amsmath, amssymb, amsfonts}
\usepackage{listings}
\usepackage{threeparttable}
\usepackage{wrapfig}
\usepackage{enumitem}
\usepackage{subcaption}
\usepackage{tcolorbox}
\usepackage{amsthm}
\newtheorem{proposition}{Proposition}
\tcbuselibrary{breakable}
\definecolor{improve}{RGB}{220, 240, 220}  % soft green: signals gain
\newcommand{\improved}[1]{\cellcolor{improve}#1}

\input{macros}

\title{Time is Not a Label: Continuous Phase Rotation for Temporal Knowledge Graphs and Agentic Memory}

\author{
    Weixian Waylon Li\thanks{This work was done during an internship of Weixian Waylon Li at LIGHTSPEED (UK).}\\
    University of Edinburgh \\
    {\small \texttt{waylon.li@ed.ac.uk}} \\\And
    Jiaxin Zhang\thanks{Corresponding authors.} \\
    LIGHTSPEED \\
    {\small \texttt{jiaxijzhang@global.tencent.com}} \\ \And 
    Xianan Jim Yang \\
    University of St Andrews \\
    {\small \texttt{xy60@st-andrews.ac.uk}} \\\AND
    Tiejun Ma\footnotemark[2] \\
    University of Edinburgh \\
    {\small \texttt{tiejun.ma@ed.ac.uk}} \\ \And
    Yiwen Guo\footnotemark[2]  \\
    Independent Researcher \\
    {\small \texttt{guoyiwen89@gmail.com}}
}

\begin{document}
\maketitle
\begin{abstract}
  Structured memory representations such as knowledge graphs are central to autonomous agents and other long-lived systems. However, most existing approaches model time as discrete metadata, either sorting by recency (burying old-yet-permanent knowledge), simply overwriting outdated facts, or requiring an expensive LLM call at every ingestion step, leaving them unable to distinguish persistent facts from evolving ones. To address this, we introduce \ourapproach\footnote{Codes available at: \url{https://github.com/Tencent/RoMem}}, a drop-in temporal knowledge graph module for structured memory systems, applicable to agentic memory and beyond. A pretrained \textit{Semantic Speed Gate} maps each relation's text embedding to a volatility score, learning from data that evolving relations (e.g., ``president of'') should rotate fast while persistent ones (e.g., ``born in'') should remain stable. Combined with continuous phase rotation, this enables \textit{geometric shadowing}: obsolete facts are rotated out of phase in complex vector space, so temporally correct facts naturally outrank contradictions without deletion. On temporal knowledge graph completion, \ourapproach achieves state-of-the-art results on ICEWS05-15 (72.6 MRR). Applied to agentic memory, it delivers $2{\sim}3\times$ MRR and answer accuracy on temporal reasoning (MultiTQ), dominates hybrid benchmark (LoCoMo), preserves static memory with zero degradation (DMR-MSC), and generalises zero-shot to unseen financial domains (FinTMMBench). 
\end{abstract}

\input{sections/introduction}

\input{sections/related-work}

\input{sections/methodology}

\input{sections/experiments}

\input{sections/conclusion}

\input{sections/limitation}

\input{sections/acknowledgements}

% Custom bibliography entries only
\bibliography{custom}

\appendix

\input{appendices/appendix-master}

\end{document}

%% file: macros.tex
\usepackage{xspace} 

\newcommand{\ourapproach}{\textsc{RoMem}\xspace}

%% file: sections/introduction.tex
\section{Introduction}
\label{sec:intro}

Structured memory representations such as knowledge graphs have become widely adopted as the long-term memory substrate for agentic systems \citep{10387715,chhikara2025mem0buildingproductionreadyai,gutiérrez2024hipporag,gutierrez2025hipporag2,rasmussen2025zeptemporalknowledgegraph,huang2026licomemorylightweightcognitiveagentic,jiang2026magmamultigraphbasedagentic}, providing unbounded, structured, and verifiable memory that decouples storage from the LLM.

However, a fundamental challenge is that, most graph-based systems model time as \textit{discrete metadata}, a timestamp that cannot encode whether a relation is permanent or transient \citep{tang2026memoryforgesynthesizelifelongmemory}.
The real world is dynamic~\citep{10.24963/ijcai.2023/734}: executive boards shift, borders change, and markets fluctuate.
When temporal conflicts arise (e.g., ``Obama is president'' vs.\ ``Biden is president''), current systems resort to three workarounds: (i)~\textbf{destructive overwriting}, which erases historical context~\citep{xu2025amemagenticmemoryllm,gutiérrez2024hipporag}; (ii)~\textbf{LLM arbitration}, which requires a language model call at every ingestion step to predict symbolic UPDATE/DELETE commands~\citep{chhikara2025mem0buildingproductionreadyai,rasmussen2025zeptemporalknowledgegraph,yan2025memoryr1enhancinglargelanguage}; or (iii)~\textbf{recency sorting}, which ranks facts by timestamp to surface the latest version.
Each has notable limitations.
Destructive overwriting permanently loses historical context.
LLM arbitration may suit short-term conversational memory, but becomes infeasible when scaling to long-term memory with millions of facts.
Recency sorting, the most common workaround, appears to work until it silently buries static knowledge: a recency-based system ranks the decades-old fact \texttt{(Obama, born\_in, Hawaii)} below fresher but irrelevant entries.
Disabling recency bias leaves temporal conflicts unresolved, confusing the downstream LLM~\citep{liu-etal-2024-lost}.
We call this the \textit{static-dynamic dilemma}: discrete metadata treats all relations identically and cannot resolve temporal conflicts without sacrificing static knowledge.

\begin{figure*}[htbp]
    \centering
    \includegraphics[width=1.0\linewidth]{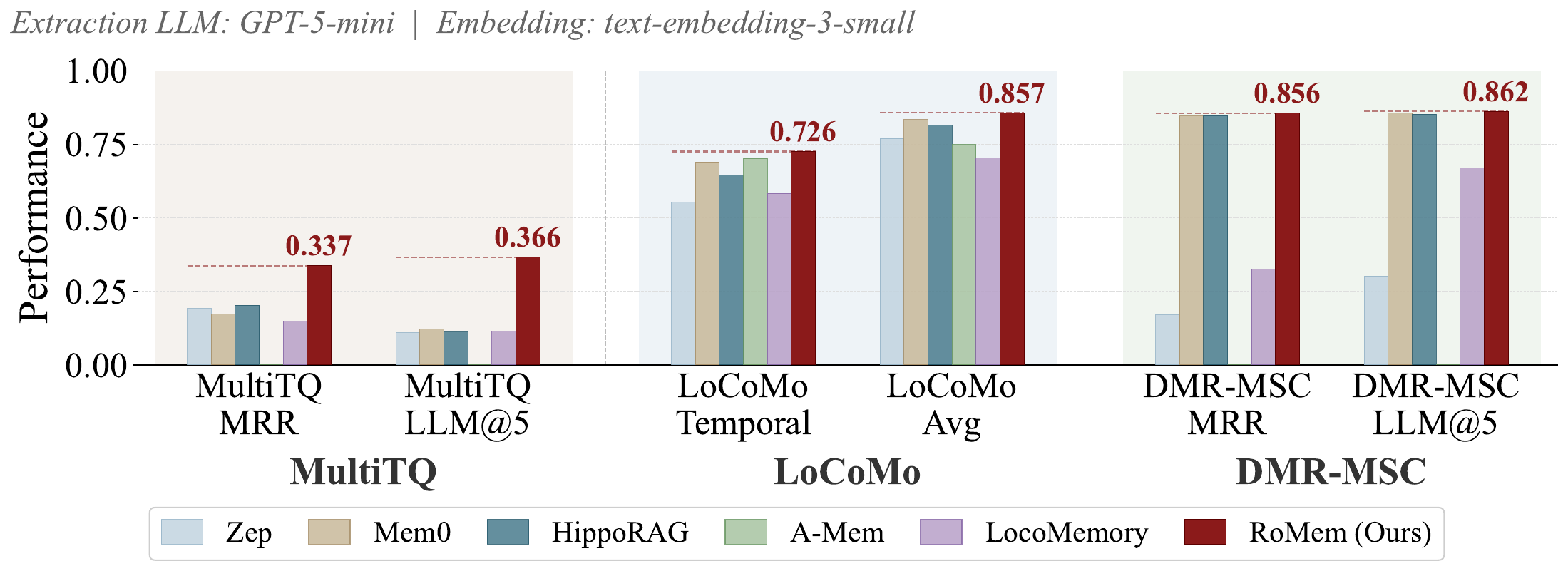}
    \caption{\textbf{Performance overview.} \ourapproach adds temporal reranking to the same HippoRAG graph-construction and semantic-retrieval pipeline. It improves temporal and hybrid retrieval while preserving performance on largely non-temporal memory. Table~\ref{tab:main_results_grid} reports results for both the closed- and open-source configurations.}
    \label{fig:results_overview}
\end{figure*}

To this end, we introduce \ourapproach, a temporal reasoning module for graph-based agentic memory that internalises time as a continuous geometric operator.
Rather than building a new memory system, \ourapproach provides a temporal engine for the knowledge graph component: it learns to distinguish static from dynamic relations zero-shot and resolves conflicts through geometry rather than database operations.
We achieve this through two mechanisms:
(1)~\textbf{Continuous Geometric Shadowing}, which models time as a functional phase shift in complex vector space, rotating dynamic facts out of alignment as they become obsolete while keeping static facts locked in phase; and (2) a \textbf{Semantic Speed Gate} that estimates relational volatility from text embeddings, outputting a per-relation scalar $\alpha_r \in (0,1)$ that controls rotation speed, so static relations ($\alpha_r \!\approx\! 0$) remain stable while dynamic ones ($\alpha_r \!\approx\! 1$) rotate to shadow obsolete facts.
The memory remains strictly append-only, yet the LLM receives a clean, unambiguous context window driven entirely by geometric proximity.

Our main contributions are as follows:
\begin{itemize}[leftmargin=1.5em, itemsep=4pt, topsep=4pt, parsep=0pt]
    \item We formalise the \textit{static-dynamic dilemma} in graph-based agentic memory, showing that discrete timestamp metadata treats all relations identically, preventing temporal conflict resolution without sacrificing static knowledge.
    \item We formulate temporal conflict resolution as continuous geometric shadowing in complex vector space, replacing destructive database updates and per-ingestion LLM calls with an append-only architecture.
    \item We introduce a Semantic Speed Gate that addresses this dilemma by learning relational volatility from text embeddings, generalising zero-shot to unseen relations and domains without manual annotation.
    \item We demonstrate that \ourapproach achieves SOTA temporal knowledge graph completion on ICEWS05-15 (72.6 MRR) and, applied to agentic memory, delivers $2{\sim}3\times$ MRR and accuracy on temporal reasoning (MultiTQ), dominates hybrid tasks (LoCoMo), preserves static memory (DMR-MSC), and generalises zero-shot to unseen financial domains (FinTMMBench). The performance overview is in Figure~\ref{fig:results_overview}.

\end{itemize}

%% file: sections/related-work.tex
\section{Related Work}
\label{sec:related-work}

\paragraph{Agentic Memory Paradigms.}
Graph-based memory is widely adopted for long-term agent knowledge, with frameworks such as Mem0~\citep{chhikara2025mem0buildingproductionreadyai}, HippoRAG~\citep{gutiérrez2024hipporag,gutierrez2025hipporag2}, Zep~\citep{rasmussen2025zeptemporalknowledgegraph}, DialogGSR~\citep{park-etal-2024-generative-dialoggsr}, LicoMemory~\citep{huang2026licomemorylightweightcognitiveagentic}, and DescGraph~\citep{hu2026doesmemoryneedgraphs} offering scalable, controllable retrieval.
Parametric approaches~\citep{yao2024retroformer,zhang2025agentlearningearlyexperience} require costly retraining and offer less transparent retrieval~\citep{tois-memory-survey,hu2026memoryageaiagents}.
Context engineering strategies, including compression~\citep{ye2025agentfoldlonghorizonwebagents}, dynamic context management~\citep{yu2025memagentreshapinglongcontextllm,zhou2025mem1learningsynergizememory,yan2025memoryr1enhancinglargelanguage,salama-etal-2025-meminsight,yan2026memdefraglatentmemorydefragmentation}, and reflective evolution~\citep{liang2025sageselfevolvingagentsreflectivememoryaugmented,zhou2025mementofinetuningllmagents}, are typically limited to short-term conversational state.

\paragraph{Temporal Gap in Memory.}
Existing memory systems lack a native mechanism for managing changing facts.
Most treat memory as a static snapshot and resort to three workarounds: (i)~destructive overwriting that permanently erases historical context \citep{xu2025amemagenticmemoryllm,gutiérrez2024hipporag,yu2025memagentreshapinglongcontextllm}; (ii)~LLM-driven arbitration that predicts UPDATE or DELETE actions during ingestion \citep{chhikara2025mem0buildingproductionreadyai,rasmussen2025zeptemporalknowledgegraph,yan2025memoryr1enhancinglargelanguage}; or (iii)~recency-based metadata sorting, which inadvertently buries old-yet-permanent facts \citep{jiang2026magmamultigraphbasedagentic}.
LLM arbitration and \ourapproach operate at different stages. The former updates memory during ingestion, whereas \ourapproach retains the full history and ranks facts at query time.
Notably, none of these approaches distinguish \emph{relational volatility}: they apply the same temporal policy to static facts (``born in'') and dynamic ones (``president of'').

\paragraph{Temporal Rotation and Semantic Transfer.}
TKG embedding methods such as TeRo~\citep{xu-etal-2020-tero} and ChronoR~\citep{sadeghian2021chronor} already represent time through rotation; temporal rotation itself is therefore not our contribution.
These methods learn a separate representation for each timestamp at a fixed resolution. \ourapproach instead defines time through the relation-conditioned function $\boldsymbol{\theta}_r(\tau)$, so it can score an unseen timestamp without learning a new lookup embedding. We use this continuous operator to resolve conflicting facts while retaining the full history in agentic memory.
Relation semantics have also been used for zero-shot transfer. For example, zrLLM~\citep{ding-etal-2024-zrllm} constructs relation representations with an LLM for zero-shot TKG forecasting. In contrast, our Semantic Speed Gate maps relation text to a scalar volatility that controls rotation speed. The entity and relation embeddings and the global temporal spectrum are then learned on the target graph.

%% file: sections/methodology.tex
\section{Methodology: \ourapproach}
\label{sec:method}

\ourapproach (Figure~\ref{fig:romem-overview}) models temporal conflict resolution as a geometric physical law within the knowledge graph embedding space, replacing discrete memory management with continuous phase rotation.
The memory module is strictly \emph{append-only}: contradictions co-exist in memory and are resolved at query time through \textit{geometric shadowing}, where the temporally aligned fact naturally outranks obsolete ones via phase proximity.
Because time is a continuous function rather than a discrete label, the system natively supports historical retrieval and zero-shot evaluation of unseen dates.

\begin{figure*}
    \centering
    \includegraphics[width=1.0\linewidth]{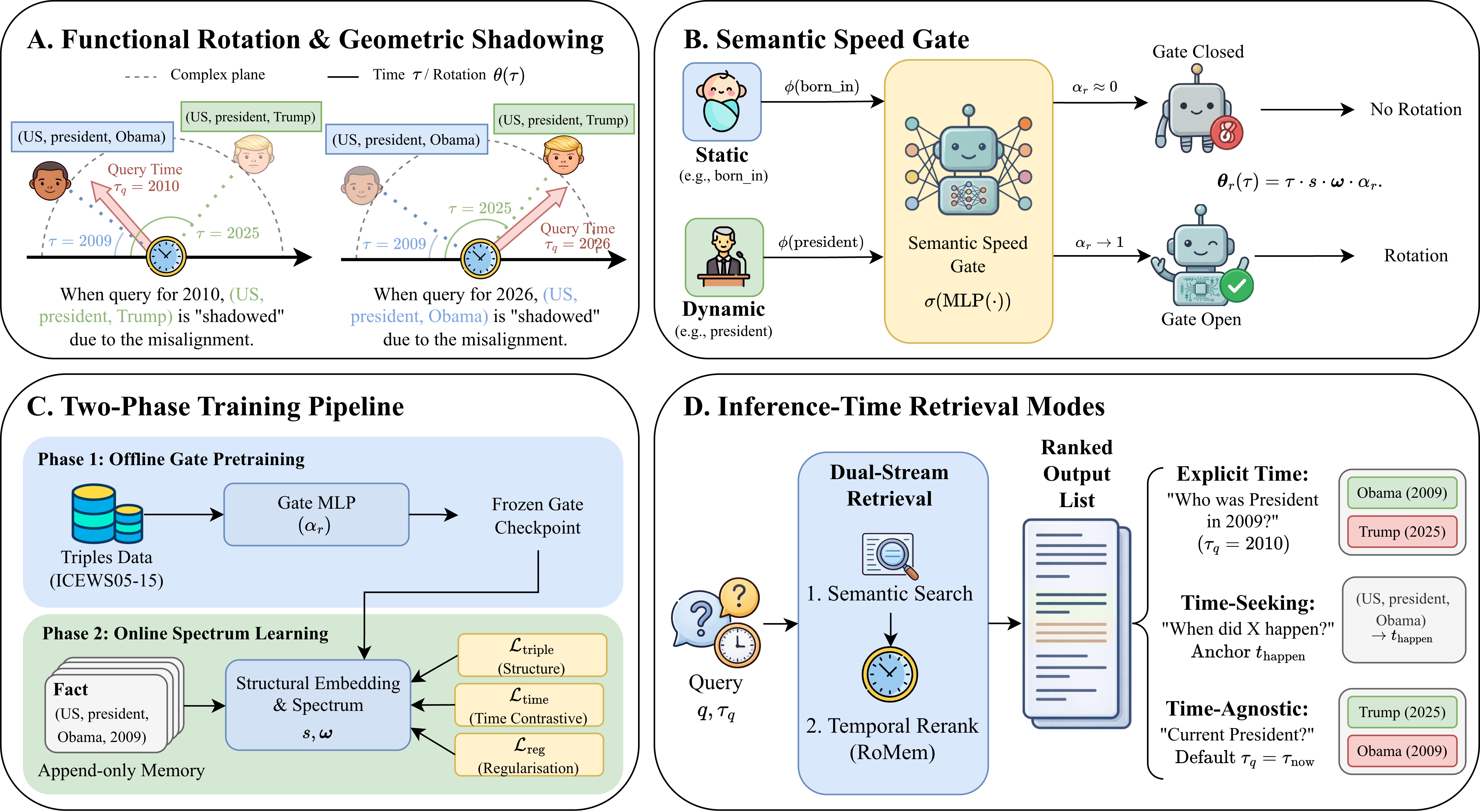}
    \caption{\textbf{Overview of the \ourapproach Architecture.} The framework consists of four stages:  \emph{(A) Functional Rotation} (\S\ref{sec:rotation}) applies geometric phase shifts to obsolete facts; \emph{(B) Semantic Speed Gate} (\S\ref{sec:gate}) determines relational volatility from text embeddings; \emph{(C) Two-Phase Training} (\S\ref{sec:training}) pretrains the gate and learns the temporal spectrum; \emph{(D) Inference-Time Retrieval} (\S\ref{sec:inference}) resolves contradictions via geometric shadowing.}
    \label{fig:romem-overview}
\end{figure*}

\subsection{Problem Setting and Memory Design}
We process a stream of textual episodes $\{d_i\}_{i=1}^{N}$ to answer queries $q$. Each episode yields relational facts via temporal Open Information Extraction (OpenIE):
$f=(h,r,t)$ where $h,t \in \mathcal{E},\ r \in \mathcal{R}$,
with a valid time $t_{\mathrm{happen}}(f)$ extracted from text and an observation time $t_{\mathrm{obs}}(f)$ at ingestion. 
If no valid time is present, $t_{\mathrm{happen}}$ remains unknown.
We maintain an append-only memory state $\mathcal{M}$ where contradictions co-exist: $m=(f,t_{\mathrm{happen}}(f),t_{\mathrm{obs}}(f),\mathrm{src})$, where $\mathrm{src} \in \{d_i\}_{i=1}^{N}$.
We store dense embeddings for passages and entities using a text encoder $\phi(\cdot)$ and build a heterogeneous graph $G=(V,E)$ induced by facts.
Extraction prompts are provided in Appendix~\ref{app:prompts}.

\subsection{Core Insight: Why Geometry Solves Temporal Conflicts}
\label{sec:insight}

The most straightforward approach to temporal memory is to store a timestamp with each fact and sort by recency at retrieval time. However, as discussed in \S\ref{sec:intro}, this metadata-based approach faces the \textit{static-dynamic dilemma}: it treats all relations identically, unable to distinguish ``born in'' (static) from ``president of'' (dynamic). To resolve this dilemma, we employ a Temporal Knowledge Graph Embedding (TKGE) that internalises time as a continuous geometric operator rather than a discrete metadata field. \ourapproach exploits the inductive power of TKG representational learning~\citep{10.24963/ijcai.2023/734} to resolve temporal conflicts natively in vector space.

Our central idea is to model time as a continuous geometric rotation, aligning with evidence from cognitive neuroscience that the mammalian hippocampus encodes time as continuous geometric trajectories rather than discrete timestamps \citep{eichenbaum2014time,howard2014unified}. Consider a ``clock hand'' analogy: the entity embedding for ``Donald Trump'' at $\tau=2025$ is rotated to a phase angle aligned with the ``President'' relation (pointing to 12 o'clock). As time flows to $\tau=2010$, the vector continuously rotates away from this alignment, reducing the retrieval score for ``Donald Trump'' while simultaneously aligning with ``Barack Obama''.
The most temporally relevant fact naturally \emph{shadows} obsolete ones through geometric proximity, without deletion. Crucially, the Semantic Speed Gate (\S\ref{sec:gate}) controls this rotation per-relation: static facts do not rotate (and therefore are never buried by recency), while dynamic facts rotate rapidly (resolving temporal conflicts geometrically).

Beyond the metadata-based approach, this design also overcomes two limitations of prior \emph{temporal embedding} methods.
First, \textbf{additive models} (e.g., T-TransE~\citep{10.1145/3184558.3191639}, HyTE~\citep{dasgupta-etal-2018-hyte}) treat time as a linear bias added to structural embeddings.
This suffers from \textit{additive decoupling}: a strong structural affinity (e.g., for popular entities) can overpower the temporal penalty so that anachronistic facts are retrieved based on popularity.
Our multiplicative rotation couples temporal alignment with structural affinity, helping the model demote facts that are structurally plausible but temporally mismatched.
Second, \textbf{discrete rotation models} (e.g., ChronoR \citep{sadeghian2021chronor}, TeRo \citep{xu-etal-2020-tero}) learn a separate embedding vector $\boldsymbol{\tau}_t$ for every observed timestamp.
Lacking a continuous functional bridge, this design leaves blind spots between observed timestamps.
Our functional definition $\boldsymbol{\theta}(\tau)$ instead treats time as a continuous geometric variable, allowing the model to score timestamps that were not observed during training. Appendix~\ref{app:math-analysis-temporal-interpolation} proves monotonic interpolation under a sufficient condition.

\subsection{Functional Rotation Mechanism}
\label{sec:rotation}

We embed entities and relations in $\mathbb{R}^{2d}$, interpreted as complex vectors in $\mathbb{C}^{d}$. 
A scalar time $\tau$ acts as a rotation operator in the unitary group $U(1)^d$ via the operator $\mathrm{Rot}(\mathbf{x}, \boldsymbol{\theta})$, which applies an element-wise phase shift $\mathbf{x} \odot e^{i\boldsymbol{\theta}}$. 
We define the relation-specific rotation angle as:
\begin{equation}
\boldsymbol{\theta}_r(\tau) = s \cdot \alpha_r \cdot \tau \cdot \boldsymbol{\omega},
\label{eq:theta_r}
\end{equation}
where $s \in \mathbb{R}^{+}$ is a global time scale parameter. We initialise $s$ with a day-level prior ($s_0 = 1/86400$) and learn it jointly with the target graph.
$\alpha_r \in (0,1)$ is the semantic speed gate (\S\ref{sec:gate}), $\tau$ is the continuous timestamp, and $\boldsymbol{\omega} \in \mathbb{R}^{k \times d}$ is a learnable inverse-frequency tensor. Under a flattened index $i$, its elements are initialised as $\omega_i=b^{-i/(kd)}$, where the base $b$ is initialised at $10{,}000$ and learned during training.

We build upon the multi-component bilinear architecture of ChronoR~\citep{sadeghian2021chronor}, replacing its discrete timestamp lookup with our functional time definition. Each entity has $k$ complex components (we use $k{=}3$, following ChronoR), each in $\mathbb{C}^d$, with relation embeddings $\mathbf{w}_r, \hat{\mathbf{w}}_r \in \mathbb{R}^{k \times 2d}$ for forward and inverse semantics, respectively. The scoring function is:
\begin{equation}
\mathbf{v}_r^c(\mathbf{e},\tau) = \mathrm{Rot}(\mathbf{e}^c, \boldsymbol{\theta}_r^c(\tau))
\end{equation}
\begin{equation}
\tilde{\mathbf{v}}_r^c(\mathbf{e},\tau) = \mathbf{v}_r^c(\mathbf{e},\tau) \odot \mathbf{w}_r^c \odot \hat{\mathbf{w}}_r^c
\end{equation}
\begin{equation}
\label{eq:scoring-function}
s_{\mathrm{kge}}\bigl((h,r,t)\mid \tau\bigr) =
\sum_{c=1}^{k} \bigl\langle \tilde{\mathbf{v}}_r^c(\mathbf{e}_h,\tau),\; \mathbf{v}_r^c(\mathbf{e}_t,\tau) \bigr\rangle
\end{equation}
Since unitary rotation preserves the vector modulus and only shifts the phase, an invalid timestamp rotates the fact out of alignment rather than merely penalising its magnitude. 
The efficient 1-vs-$N$ retrieval reformulation and a simplified DistMult variant are detailed in Appendix~\ref{app:method-details}.

\subsection{Semantic Speed Gate}
\label{sec:gate}

A fundamental challenge in applying KGE to OpenIE is relational diversity: OpenIE yields thousands of surface forms (e.g., ``married to'', ``wedded to'', ``spouse of'') for identical relations. Methods that learn a fixed parameter per string cannot generalise across linguistic variations. 
Moreover, relations have distinct temporal natures: ``born in'' is static while ``visiting'' is dynamic. 
We introduce a Semantic Speed Gate that derives rotation velocity from the relation's text embedding $\phi(r)$: $\alpha_r = \sigma\bigl(\mathrm{MLP}(\phi(r))\bigr) \in (0,1)$.
This achieves \textit{zero-shot temporal transfer}: if the model learns that ``married'' implies stability ($\alpha \approx 0$), it automatically stabilises unseen relations like ``wedded'' because their embeddings lie close in semantic space.

The model does not use static/dynamic relation labels. Instead, it learns from temporal transitions.
For \textbf{dynamic relations} where the tail entity changes over time (e.g., \texttt{president\_of}), changed counterparts provide a high-volatility signal and drive $\alpha_r \to 1$.
For \textbf{persistent slots}, unchanged counterparts provide a low-volatility signal and drive $\alpha_r \to 0$.
This functions as a ``temporal clutch'': static facts ($\alpha_r \to 0$) are locked in alignment, while dynamic facts ($\alpha_r \to 1$) rotate to shadow obsolete contradictions.

\subsection{Two-Phase Training}
\label{sec:training}

A central design challenge is decoupling the semantic gate $\alpha_r$ from the global time spectrum $(s, \boldsymbol{\omega})$.
Joint training on a single dataset causes two failure modes: (i)~sparse datasets lack sufficient competing facts to provide a learning signal for $\alpha_r$, causing gate collapse; and (ii)~temporal discrimination objectives (Equation~\eqref{eq:time-contrastive-loss}) treat alternative timestamps as negative samples, which incorrectly penalises the infinite validity of static relations and forces $\alpha_r$ away from the desired zero state.
We address these issues with a two-phase training procedure since $\alpha_r$ depends exclusively on relation text embeddings.

\paragraph{Phase 1: Offline Gate Pretraining.}
We construct a self-supervised dataset of temporal transitions from ICEWS05-15~\citep{garcia-duran-etal-2018-learning}. ICEWS05-15 is event-heavy, but the gate does not rely on manually labelled static relations. For each relational slot $(h,r)$, an unchanged counterpart provides a low-volatility observation, while a changed counterpart provides a high-volatility observation. We filter non-functional slots whose ratio of unique counterparts to observations exceeds a threshold. The gate MLP is trained with a rotation-based BCE objective:
\begin{equation}
\label{eq:gate-pretrain}
\theta_i = \alpha_{r_i} \cdot \lambda \cdot \Delta t_i, \quad
p_{\mathrm{change}}(\theta_i) = 1 - e^{-\theta_i},
\end{equation}
\begin{equation}
\mathcal{L}_{\mathrm{gate}} = \mathrm{BCE}(y_i,\, p_{\mathrm{change}}(\theta_i))
\end{equation}
where $\Delta t_i$ is the time gap between adjacent observations, $y_i \in \{0,1\}$ indicates entity change. After pretraining, only the MLP weights are retained.

\paragraph{Phase 2: Online Spectrum Learning.}
We load the pretrained gate and keep it frozen. We then learn the global spectrum $(s,\boldsymbol{\omega})$ and the entity and relation embeddings on the target graph using $\mathcal{L}=\mathcal{L}_{\mathrm{triple}}+\lambda_t\mathcal{L}_{\mathrm{time}}+\mathcal{L}_{\mathrm{reg}}$. This phase does not require target-domain volatility labels or fine-tuning the gate, but it does require training on the target graph. The structural loss $\mathcal{L}_{\mathrm{triple}}$ uses 1-vs-all cross-entropy, while the time-contrastive loss $\mathcal{L}_{\mathrm{time}}$ uses a listwise objective:
\begin{align}
\label{eq:time-contrastive-loss}
\mathcal{L}_{\mathrm{time}} &= -\sum_{j=0}^{J} p^{*}_j \log p_j, \nonumber\\
p_j &= \mathrm{softmax}(\mathbf{z})_j, \quad z_0=s(f|\tau), \nonumber\\
z_j &= s(f|\tilde{\tau}_j), \quad j\in\{1,\ldots,J\}
\end{align}
where $p^{*}_j$ uses a Gaussian kernel to softly prefer timestamps close to the validity center. Detailed loss formulations, regularisation, and negative sampling strategies are provided in Appendix~\ref{app:method-details}.

\subsection{Inference-Time Retrieval}
\label{sec:inference}

\paragraph{Geometric Shadowing.}
Training aligns each fact with its observed validity time. At query time $\tau_q$, its phase changes from the observed anchor $t_{\mathrm{happen}}$ by $\Delta\boldsymbol{\theta}_r=s\alpha_r(\tau_q-t_{\mathrm{happen}})\boldsymbol{\omega}$.
For a current-information query ($\tau_q \approx \tau_{\mathrm{now}}$), a recently valid fact is expected to remain close to its learned alignment, whereas an older conflicting fact is more likely to rotate out of phase.
Thus, the new fact naturally shadows the old one without explicit deletion.
Similarly, setting $\tau_q$ to a past date restores the historical fact's alignment while rotating modern facts out of focus.
Appendix~\ref{app:math-analysis-temporal-interpolation} analyses this behaviour in a simplified pairwise setting, and Appendix~\ref{app:qualitative} provides a scoring trace for ICEWS05-15 facts.

\paragraph{Dual-Stream Retrieval.}
We build upon HippoRAG's~\citep{gutierrez2025hipporag2} retrieval pipeline, which computes a semantic score $S_{\mathrm{sem}}$ by combining dense passage similarity with Personalised PageRank over the knowledge graph. We then apply the TKGE scoring function $S_{\mathrm{kge}} = s_{\mathrm{kge}}((h,r,t)\mid\tau)$ from \S\ref{sec:rotation} as a temporal re-ranker. To prevent ``right time, wrong topic'' boosts, we use multiplicative gating with strength $\alpha_g \ge 0$ (set to $0.3$ in all agentic-memory experiments):
$
S_{\mathrm{final}} = S_{\mathrm{sem}} \cdot \left(1 + \alpha_g \cdot S_{\mathrm{kge}}\right)
$,
so temporal signals only amplify facts that are already semantically plausible. We use the same candidate generator as HippoRAG. As a result, \ourapproach cannot recover a fact that is missing from the initial candidate pool, and the direct comparison with HippoRAG isolates the effect of temporal reranking.

\paragraph{Query-Time Modes.}
After the query time and intent have been resolved, we support three retrieval modes:
(1)~\textbf{Explicit Time} ($\tau_q$ present), which scores candidates at a specific timestamp to strictly enforce temporal validity; 
(2)~\textbf{Time-Seeking} (e.g., ``When did X happen?''), which evaluates each candidate against its own stored $t_{\mathrm{happen}}$ to verify internal validity without an external $\tau_q$; 
and (3)~\textbf{Time-Agnostic}, which defaults to $\tau_{\mathrm{now}}$, leveraging geometric shadowing to prioritise fresher dynamic facts.
The temporal extractor handles explicit dates and simple relative expressions. References such as ``then'' must first be resolved from the dialogue history before the resulting time constraint is passed to \ourapproach.

%% file: sections/experiments.tex
\section{Experiments}
\label{sec:experiments}

We evaluate \ourapproach through three research questions: \textbf{(RQ1)} Does the transition from discrete timestamp projections to functional temporal modelling maintain or improve performance on standard TKGE benchmarks? (\S\ref{subsec:rq1_results}); \textbf{(RQ2)} Can \ourapproach outperform existing agentic memory baselines on temporal reasoning tasks while maintaining robustness on non-temporal retrieval? (\S\ref{subsec:rq2_results}); and \textbf{(RQ3)} Can \ourapproach generalise to unseen domain-specific relations? (\S\ref{subsec:rq3_results}).

\subsection{Experimental Setup}
\label{subsec:experimental-setup}

\paragraph{Datasets.}
We evaluate on a diverse set of benchmarks categorised by our research questions.
For RQ1, we use ICEWS05-15 \citep{garcia-duran-etal-2018-learning}.
For RQ2, we evaluate agentic memory across three levels of temporal complexity: (1)~\textbf{Heavy Temporal:} MultiTQ \citep{chen-etal-2023-multi} focuses on temporal reasoning and conflicting facts; (2)~\textbf{Hybrid:} LoCoMo \citep{maharana-etal-2024-evaluating} combines temporal updates with general memory questions; and (3)~\textbf{Largely Non-Temporal:} DMR-MSC \citep{packer2024memgptllmsoperatingsystems} tests whether temporal reranking preserves performance on conversational memory with little temporal conflict.
Finally, for RQ3, we use FinTMMBench \citep{10.1145/3746027.3755723}.
Full dataset statistics are provided in Appendix~\ref{app:datasets}.

\paragraph{Metrics and Baselines.}
For retrieval, we report Mean Reciprocal Rank (MRR), Hits@$k$, and Recall@$k$.
We report answer accuracy (Acc@$k$) using a deterministic cascading verifier for MultiTQ and an LLM judge for DMR-MSC and FinTMMBench. For LoCoMo, Appendix~\ref{app:additional-evaluation} reports the standard partial-match F1 in addition to retrieval recall.
For TKG completion on ICEWS05-15, we compare with non-rotation baselines: DistMult~\citep{DBLP:journals/corr/YangYHGD14a}, DE-SimplE~\citep{Goel_Kazemi_Brubaker_Poupart_2020}, TComplEx~\citep{Lacroix2020Tensor}, TLT-KGE~\citep{10.1145/3511808.3557233-tltkge}, HGE~\citep{Pan_Nayyeri_Li_Staab_2024}, and TimeGate~\citep{11192270}, and rotation-based methods, TeRo~\citep{xu-etal-2020-tero}, ChronoR~\citep{sadeghian2021chronor}, RotateQVS~\citep{chen-etal-2022-rotateqvs}, TeAST~\citep{li-etal-2023-teast}, TCompoundE~\citep{ying-etal-2024-simple}, and 3DG-TE~\citep{li-etal-2025-leveraging-3d}.
For agentic memory benchmarks, we compare against recent graph-based agentic memory systems, including Mem0 \citep{chhikara2025mem0buildingproductionreadyai}, Zep \citep{rasmussen2025zeptemporalknowledgegraph}, LicoMemory \citep{huang2026licomemorylightweightcognitiveagentic} and HippoRAG \citep{gutiérrez2024hipporag,gutierrez2025hipporag2}, as well as a widely used non-graph method, A-Mem~\citep{xu2025amemagenticmemoryllm}.

Detailed metric definitions, answer verification procedures, implementation configurations, and TKGE hyperparameters are provided in Appendices~\ref{app:metrics},~\ref{app:impl-configs}, and~\ref{app:tkge-hyperparams}, respectively.

% Cannot benchmark with MemoTime because it has no triple extractor, not a real agentic memory framework.

% ==================== RQ1 ====================

\begin{table}
\centering
\small
\resizebox{\linewidth}{!}{
\begin{threeparttable}
\caption{Results on ICEWS05-15. Baseline results are taken from \citet{li-etal-2025-leveraging-3d} and \citet{11192270}. Best results are in \textbf{bold}. \colorbox{improve}{Green cells} indicate results where \ourapproach improves over its backbones (DistMult and ChronoR).}
\label{tab:tkge_results}
\begin{tabular}{lcccc}
\toprule
\textbf{Method} & \textbf{MRR} & \textbf{Hit@1} & \textbf{Hit@3} & \textbf{Hit@10} \\
\midrule
\multicolumn{5}{l}{\textbf{Non-Rotation Based}}  \\
\midrule
DistMult  (\citeyear{DBLP:journals/corr/YangYHGD14a})      & 45.6 & 33.7 &  -   & 69.1 \\
DE-SimplE (\citeyear{Goel_Kazemi_Brubaker_Poupart_2020})      & 51.3 & 39.2 & 57.8 & 74.8 \\
TComplEx  (\citeyear{Lacroix2020Tensor})      & 66.5 & 58.3 & 71.6 & 81.1 \\
TLT-KGE   (\citeyear{10.1145/3511808.3557233-tltkge})      & 68.6 & 60.7 & 73.5 & 83.1 \\
HGE       (\citeyear{Pan_Nayyeri_Li_Staab_2024})      & 68.8 & 60.8 & 74.0 & 83.5 \\
TimeGate  (\citeyear{11192270})      & 69.2 & 61.3 & 74.5 & 83.7 \\
\midrule
\multicolumn{5}{l}{\textbf{Rotation Based}}  \\
\midrule
TeRo      (\citeyear{xu-etal-2020-tero})      & 58.6 & 46.9 & 66.8 & 79.5 \\
ChronoR   (\citeyear{sadeghian2021chronor})      & 68.4 & 61.1 & 73.0 & 82.1 \\
RotateQVS (\citeyear{chen-etal-2022-rotateqvs})      & 63.3 & 52.9 & 70.9 & 81.3 \\
TeAST     (\citeyear{li-etal-2023-teast})      & 68.3 & 60.4 & 73.2 & 82.9 \\
TCompoundE (\citeyear{ying-etal-2024-simple})     & 69.2 & 61.2 & 74.3 & 83.7 \\
3DG-TE    (\citeyear{li-etal-2025-leveraging-3d})      & 69.4 & 61.4 & 74.7 & \textbf{84.1} \\
\midrule
\multicolumn{5}{l}{\textbf{\ourapproach (Ours)}} \\
\midrule
\ourapproach-DistMult  & \improved{62.1} & \improved{54.2} & \improved{66.3} & \improved{77.2} \\
\ourapproach-ChronoR   & \improved{\textbf{72.6}} & \improved{\textbf{66.8}} & \improved{\textbf{75.9}} & \improved{83.7} \\
\bottomrule
\end{tabular}
\begin{tablenotes}[flushleft]
\footnotesize
    \item[] We use $k=3$ complex components for ChronoR and \ourapproach-ChronoR, following \citet{sadeghian2021chronor}.
\end{tablenotes}
\end{threeparttable}
}
\vspace{-10pt}
\end{table}

\subsection{Verification of Functional Temporal Modelling (RQ1)}
\label{subsec:rq1_results}

We first test the complete model on temporal knowledge graph completion. This experiment shows whether the functional time operator, Semantic Speed Gate, and time-contrastive loss preserve the link-prediction performance of the DistMult and ChronoR backbones.

As shown in Table~\ref{tab:tkge_results}, \ourapproach-ChronoR achieves an MRR of 72.6, outperforming the vanilla ChronoR (68.4) on ICEWS05-15.
Similarly, our DistMult-based variant, \ourapproach-DistMult, shows a substantial performance improvement (62.1 MRR) compared to the static DistMult baseline (45.6 MRR).
Among the listed methods, \ourapproach-ChronoR obtains the best MRR (72.6), Hit@1 (66.8), and Hit@3 (75.9); its Hit@10 is 83.7 compared with 84.1 for 3DG-TE.

Together, these results show that the complete model improves filtered link prediction over both backbones. We isolate the three components on MultiTQ in Table~\ref{tab:component-ablation}.

% ==================== RQ2 ====================

\subsection{Performance on Episodic and Temporal Memory Tasks (RQ2)}
\label{subsec:rq2_results}

\begin{table*}[t]
\stepcounter{footnote}
\newcommand{\amemfnmark}{\textsuperscript{\thefootnote}}
\centering
\caption{Comprehensive evaluation of \ourapproach. \textbf{(a) MultiTQ:} temporal reasoning. \textbf{(b) LoCoMo:} hybrid retrieval (Recall@10; answer F1 is in Appendix~\ref{app:additional-evaluation}). \textbf{(c) DMR-MSC:} largely non-temporal memory. \textbf{(d) FinTMMBench:} transfer to an unseen domain. Implementation denotes the graph-construction LLM and embedding model. Best results are in \textbf{bold}. \colorbox{improve}{Green cells} mark the direct comparison with HippoRAG under the same graph-construction and semantic-retrieval pipeline.}
\label{tab:main_results_grid}

% --- Row 1, Left: MultiTQ ---
\begin{subtable}[t]{0.5\linewidth}
\centering
\caption{MultiTQ (RQ2, Heavy Temporal)}
\label{tab:multitq_sub}
\resizebox{\linewidth}{!}{
\begin{tabular}{lccccc}
\toprule
\textbf{Method} & \textbf{MRR} & \textbf{Hit@3} & \textbf{Hit@10} & \textbf{Acc@5} & \textbf{Acc@10} \\
\midrule
\multicolumn{6}{l}{\textbf{GPT-5-mini + text-embedding-3-small}} \\
\midrule
Zep & 0.192 & 0.208 & 0.310 & 0.110 & 0.118 \\
Mem0 & 0.174 & 0.190 & 0.282 & 0.122 & 0.122 \\
A-Mem\amemfnmark & -- & -- & -- & -- & --  \\
LicoMem. & 0.149 & 0.160 & 0.292 & 0.114 & 0.128 \\
HippoRAG & 0.203 & 0.232 & 0.348 & 0.112 & 0.102 \\
\ourapproach & \improved{\textbf{0.337}} & \improved{\textbf{0.384}} & \improved{\textbf{0.502}} & \improved{\textbf{0.366}} & \improved{\textbf{0.392}} \\
\midrule
\multicolumn{6}{l}{\textbf{LLaMA-3.1-70B + BGE-M3}} \\
\midrule
Zep & 0.217 & 0.252 & 0.370 & 0.098 & 0.116 \\
Mem0 & 0.228 & 0.264 & 0.356 & 0.120 & 0.114 \\
A-Mem\amemfnmark & -- & -- & -- & -- & -- \\
LicoMem. & 0.159 & 0.182 & 0.304 & 0.114 & 0.120 \\
HippoRAG & 0.236 & 0.266 & 0.354 & 0.122 & 0.116 \\
\ourapproach & \improved{\textbf{0.316}} & \improved{\textbf{0.342}} & \improved{\textbf{0.440}} & \improved{\textbf{0.312}} & \improved{\textbf{0.338}} \\
\bottomrule
\end{tabular}
}
\end{subtable}
\hfill
% --- Row 1, Right: LoCoMo ---
\begin{subtable}[t]{0.49\linewidth}
\centering
\caption{LoCoMo (RQ2, Hybrid Tasks)}
\label{tab:locomo_sub}
\resizebox{\linewidth}{!}{
\begin{tabular}{lccccc}
\toprule
\textbf{Method} & \textbf{\makecell{Single\\Hop}} & \textbf{\makecell{Multi\\Hop}} & \textbf{\makecell{Open\\Domain}} & \textbf{\makecell{Temporal\\Reason}} & \textbf{Average} \\
\midrule
\multicolumn{6}{l}{\textbf{GPT-5-mini + text-embedding-3-small}} \\
\midrule
Zep    & 0.557 & \textbf{0.861} & 0.831 & 0.553 & 0.770 \\
Mem0   & 0.740 & 0.832 & 0.883 & 0.690 & 0.834 \\
A-Mem  & 0.740 & 0.846 & 0.860 & 0.691 & 0.825  \\
LicoMem. & 0.727 & 0.856 & 0.848 & 0.661 & 0.816  \\
HippoRAG & 0.711 & 0.837 & 0.862 & 0.645 & 0.815  \\
\ourapproach & \improved{\textbf{0.768}} & \improved{0.850} & \improved{\textbf{0.904}} & \improved{\textbf{0.726}} & \improved{\textbf{0.857}} \\

\midrule
\multicolumn{6}{l}{\textbf{LLaMA-3.1-70B + BGE-M3}} \\
\midrule
Zep    & 0.557 & \textbf{0.861} & 0.831 & 0.553 & 0.770 \\
Mem0   & 0.746 & 0.860 & 0.875 & 0.737 & \textbf{0.839} \\
A-Mem  & 0.658 & 0.776 & 0.777 & 0.702 & 0.750  \\
LicoMem.  & 0.605 & 0.768 & 0.725 & 0.584 & 0.703  \\
HippoRAG  & 0.717 & 0.852 & 0.870 & 0.732 & 0.830  \\
\ourapproach & \improved{\textbf{0.759}} & 0.824 & \improved{\textbf{0.879}} & \improved{\textbf{0.759}} & \improved{0.838} \\
\bottomrule
\end{tabular}
}
\end{subtable}

\vspace{10pt}

% --- Row 2, Left: DMR-MSC ---
\begin{subtable}[t]{0.5\linewidth}
\centering
\caption{DMR-MSC (RQ2, Largely Non-Temporal)}
\label{tab:dmrmsc_sub}
\resizebox{\linewidth}{!}{
\begin{tabular}{lccccc}
\toprule
\textbf{Method} & \textbf{MRR} & \textbf{Hit@1} & \textbf{Hit@3} & \textbf{Acc@5} & \textbf{Acc@10} \\
\midrule
\multicolumn{6}{l}{\textbf{GPT-5-mini + text-embedding-3-small}} \\
\midrule
Zep    & 0.170 & 0.110 & 0.180 & 0.302 & 0.376 \\
Mem0   & 0.847 & 0.766 & 0.926 & 0.858 & 0.848 \\
A-Mem  & 0.825 & 0.732 & 0.912 & 0.848 & 0.856  \\
LicoMem. & 0.326 & 0.224 & 0.372 & 0.670 & 0.728  \\
HippoRAG & 0.848 & 0.768 & 0.926 & 0.852 & 0.850 \\
\ourapproach & \improved{\textbf{0.856}} & \improved{\textbf{0.774}} & \improved{\textbf{0.934}} & \improved{\textbf{0.862}} & \improved{\textbf{0.858}} \\
\midrule
\multicolumn{6}{l}{\textbf{LLaMA-3.1-70B + BGE-M3}} \\
\midrule
Zep   & 0.333 & 0.232 & 0.394 & 0.384 & 0.428 \\
Mem0  & 0.821 & 0.714 & 0.926 & 0.758 & 0.770 \\
A-Mem & 0.823 & 0.732 & 0.902 & 0.728 & 0.738  \\
LicoMem. & 0.202 & 0.138 & 0.228 & 0.258 & 0.338  \\
HippoRAG  & 0.818 & 0.718 & 0.912 & 0.768 & 0.776 \\
\ourapproach                                        & \improved{\textbf{0.847}} & \improved{\textbf{0.760}} & \improved{\textbf{0.930}} & \improved{\textbf{0.774}} & \improved{\textbf{0.786}} \\
\bottomrule
\end{tabular}
}
\end{subtable}
\hfill
% --- Row 2, Right: FinTMMBench ---
\begin{subtable}[t]{0.49\linewidth}
\centering
\caption{FinTMMBench (RQ3)}
\label{tab:fintmmbench_sub}
\resizebox{\linewidth}{!}{
\begin{tabular}{lccccc}
\toprule
\textbf{Method} & \textbf{MRR} & \textbf{R@5} & \textbf{R@10} & \textbf{Acc@5} & \textbf{Acc@10} \\
\midrule
\multicolumn{6}{l}{\textbf{GPT-5-mini + text-embedding-3-small}} \\
\midrule
Zep   & 0.703 & 0.644 & 0.759 & 0.480 & 0.520 \\
Mem0  & 0.691 & 0.645 & 0.768 & 0.550 & 0.610 \\
A-Mem  & 0.716 & 0.647 & 0.796 & 0.540 & 0.640  \\
LicoMem. & 0.488 & 0.480 & 0.609 & 0.480 & 0.590 \\
HippoRAG  & 0.690 & 0.645 & 0.768 & 0.550 & \textbf{0.650} \\
\ourapproach & \improved{\textbf{0.728}} & \improved{\textbf{0.673}} & \improved{\textbf{0.779}} & \improved{\textbf{0.580}} & \textbf{0.650} \\
\midrule
\multicolumn{6}{l}{\textbf{LLaMA-3.1-70B + BGE-M3}} \\
\midrule
Zep  & 0.515 & 0.510 & 0.591 & 0.430 & 0.450 \\
Mem0 & 0.718 & 0.647 & 0.765 & 0.570 & 0.610 \\
A-Mem  & 0.650 & 0.631 & 0.742 & 0.520 & 0.590  \\
LicoMem. & 0.554 & 0.559 & 0.662 & 0.460 & 0.520 \\
HippoRAG  & 0.724 & 0.680 & 0.766 & 0.610 & 0.610 \\
\ourapproach  & \improved{\textbf{0.726}} & \improved{\textbf{0.707}} & \improved{\textbf{0.793}} & \improved{\textbf{0.620}} & \improved{\textbf{0.650}} \\
\bottomrule
\end{tabular}
}
\end{subtable}
\end{table*}
% \footnotetext[\value{footnote}]{A-Mem is a non-graph agentic memory that does not natively support structured triple ingestion. It is therefore excluded from MultiTQ, which requires incremental ingestion of ${\sim}$11K temporal triples.}
\footnotetext[\value{footnote}]{A-Mem is excluded from MultiTQ because it lacks native support for massive structured triple ingestion (${\sim}$11K).}

To answer RQ2, we evaluate whether \ourapproach can resolve complex temporal conflicts in agentic memory without degrading foundational, non-temporal retrieval capabilities.
We evaluate all agentic memory methods under two implementation configurations: a closed-source API setup (GPT-5-mini with text-embedding-3-small) and an open-source setup (LLaMA-3.1-70B with BGE-M3) to serve as a robustness check.
Because \ourapproach adds only temporal reranking to HippoRAG, the direct comparison between the two systems isolates the effect of our method. Comparisons with Mem0, Zep, A-Mem, and LicoMemory provide broader system-level context.

\paragraph{Temporal Reasoning (MultiTQ).}
MultiTQ directly tests temporal reasoning over changing and conflicting facts. With GPT-5-mini for graph construction, adding \ourapproach to HippoRAG improves MRR from 0.203 to 0.337 and Acc@5 from 0.112 to 0.366 (Table~\ref{tab:multitq_sub}). The open-source configuration shows similar gains: MRR increases from 0.236 to 0.316 and Acc@5 from 0.122 to 0.312. The improvement therefore holds across both graph-construction and embedding configurations.

Table~\ref{tab:component-ablation} serves as an ablation on MultiTQ.
Removing the Semantic Speed Gate lowers MRR from 0.3371 to 0.1044, showing that one shared rotation speed is unsuitable for both persistent and evolving relations. 
Replacing functional time with learned timestamp embeddings lowers MRR further to 0.0452. 
Removing the time-contrastive loss has a smaller effect: MRR decreases from 0.3371 to 0.3254 and Acc@5 from 0.366 to 0.340. 
Functional time and semantic gating therefore provide the main gains, while the time-contrastive loss provides an additional improvement.

\begin{table}[t]
\centering
\footnotesize
\setlength{\tabcolsep}{3pt}
\caption{Single-factor ablation on the fixed MultiTQ subset (selected metrics; full results in Appendix~\ref{app:component-ablation-details}). ``No gate'' uses one speed for all relations; ``discrete time'' uses learned timestamp lookups.}
\label{tab:component-ablation}
\begin{tabular}{lrrrr}
\toprule
\textbf{Variant} & \textbf{MRR} & \textbf{H@3} & \textbf{Acc@5} & \textbf{Acc@10} \\
\midrule
Full \ourapproach & \textbf{0.3371} & \textbf{0.384} & \textbf{0.366} & \textbf{0.392} \\
w/o time contrastive & 0.3254 & 0.368 & 0.340 & 0.390 \\
w/o semantic gate & 0.1044 & 0.116 & 0.100 & 0.108 \\
Discrete time & 0.0452 & 0.046 & 0.040 & 0.040 \\
\bottomrule
\end{tabular}
\end{table}

\paragraph{Hybrid Retrieval (LoCoMo).}
LoCoMo contains single-hop, multi-hop, open-domain, and temporal questions. \ourapproach improves the average Recall@10 of HippoRAG from 0.815 to 0.857 with GPT-5-mini and from 0.830 to 0.838 with the open-source configuration (Table~\ref{tab:locomo_sub}). In the GPT-5-mini configuration, the largest gain is on temporal questions, where Recall@10 increases from 0.645 to 0.726. These results measure retrieval quality. Appendix~\ref{app:additional-evaluation} reports answer-level partial-match F1, where \ourapproach matches HippoRAG at 0.457 in the GPT-5-mini configuration and improves by 0.001 in the open-source configuration. Thus, the retrieval gains do not lead to a comparable improvement in answer F1 with the current reader.

\paragraph{Preservation of General Memory (DMR-MSC).}
DMR-MSC tests conversational memory in a setting with little explicit temporal conflict.
As shown in Table~\ref{tab:dmrmsc_sub}, \ourapproach achieves an MRR of 0.856 and an LLM@5 Accuracy of 0.862 under the GPT-5-mini setup, slightly improving upon the baseline HippoRAG performance of 0.848 and 0.852, respectively.
We observe similar gains in the LLaMA-3.1-70B implementation.
These results show that the temporal reranker preserves performance on this largely non-temporal benchmark. The ranking of the methods is unchanged when we use the stricter conflict-aware judge described in Appendix~\ref{app:additional-evaluation}.

\paragraph{Conclusion for RQ2.}
Across both configurations, \ourapproach improves temporal retrieval over the same HippoRAG candidate pool and preserves performance when temporal conflicts are largely absent.

% ==================== RQ3 ====================

\subsection{Domain Generalisation (RQ3)}
\label{subsec:rq3_results}

To answer RQ3, we evaluate \ourapproach on FinTMMBench to test zero-shot generalisation in high-volatility financial contexts \citep{10.1145/3768623,li2026llmbasedfinancialinvestingstrategies}.
As shown in Table~\ref{tab:fintmmbench_sub}, \ourapproach achieves 0.728 MRR and 0.580 Acc@5 with GPT-5-mini, compared with 0.690 and 0.550 for HippoRAG. In the open-source configuration, \ourapproach improves retrieval recall, while MRR increases slightly from 0.724 to 0.726.
The frozen gate therefore transfers from geopolitical event relations to the unseen financial relations evaluated here. \ourapproach also remains the top-ranked method under the stricter conflict-aware judge (Appendix~\ref{app:additional-evaluation}).

%% file: sections/conclusion.tex
\section{Conclusion}
\label{sec:conclusion}

We identified two limitations in how graph-based memory systems handle time. First, discrete timestamp metadata applies the same policy to static and dynamic relations, so recency sorting can bury static facts. Second, existing conflict-management strategies either overwrite history or require an LLM at every ingestion step, which becomes costly at scale.
\ourapproach addresses these limitations by representing time as a continuous phase rotation in the KG embedding space. Its pretrained Semantic Speed Gate maps relation text embeddings to rotation speeds, keeping persistent relations aligned while rotating outdated facts out of phase. Because old facts remain stored, the memory is append-only and can answer both current and historical queries.
Empirically, \ourapproach achieves state-of-the-art TKGE results and large gains over the corresponding HippoRAG pipeline on temporal memory, while preserving performance on largely non-temporal memory and transferring the frozen gate to an unseen financial domain.

%% file: sections/limitation.tex
\section*{Limitations}

Our work has three main limitations. First, the Semantic Speed Gate is pretrained on ICEWS05-15, a political events dataset. While it generalises zero-shot to unseen domains in our experiments, incorporating additional domain-specific temporal data during pretraining, which is efficient due to the lightweight MLP architecture, could further improve calibration for specialised settings.

Second, our evaluation is conducted exclusively on English-language benchmarks. The effectiveness of the text-embedding-based gate on multilingual or low-resource settings remains to be validated.

Additionally, the current system operates on textual relational facts and does not handle multimodal inputs such as images, or audio, which are increasingly relevant in real-world agentic applications. Extending \ourapproach to multimodal temporal reasoning is a promising direction for future work.

%% file: sections/acknowledgements.tex
\section*{Acknowledgements}

AI assistants were used only to polish author-written text and to help debug code. All suggestions were reviewed and validated by the authors, who take full responsibility for the paper and code.

%% file: appendices/appendix-master.tex
\newcounter{promptbox}
\renewcommand{\thepromptbox}{\Roman{promptbox}}

\newcommand{\stepbox}[1]{\refstepcounter{promptbox}\label{#1}}
\newtcolorbox{promptbox}[1]{%
  colback=black!4,
  colframe=black!55,
  coltitle=white,
  fonttitle=\bfseries\small,
  title={Box~\thepromptbox: #1},
  boxrule=0.6pt,
  arc=2pt,
  left=6pt, right=6pt, top=4pt, bottom=4pt,
  breakable
}

\input{appendices/extended-methodology}  
\input{appendices/prompts}              
\input{appendices/datasets}             
\input{appendices/licenses}
\input{appendices/metrics}             
\input{appendices/additional-evaluation}
\input{appendices/extended-experimental-setup}   
\input{appendices/math-analysis}
\input{appendices/qualitative-examples}

%% file: appendices/extended-methodology.tex
\section{Scoring, Training, and Gate Formulations}
\label{app:method-details}

\subsection{Scoring Variants}
\label{app:unrotation}

\paragraph{Efficient 1-vs-$N$ Retrieval.}
In practice, the trace-diagonal sum across $k$ components reduces to a flat element-wise product and summation over the $k \times 2d$ real dimensions. An important consequence of this structure is the \emph{unrotation trick}: to score a query $(h,r,?)$ against all $N$ entities simultaneously, we form $\mathbf{q} = \mathrm{Rot}(\mathbf{e}_h, \boldsymbol{\theta}) \odot \mathbf{w}_r \odot \hat{\mathbf{w}}_r$ and then \emph{unrotate} by $-\boldsymbol{\theta}$, yielding a vector in the same space as the raw entity embeddings. A single matrix multiplication $\mathrm{Rot}(\mathbf{q}, -\boldsymbol{\theta}) \cdot \mathbf{E}^{\top}$ then produces scores for all entities without materialising $N$ separate rotations, enabling efficient 1-vs-all training.
A formal proof with complexity analysis is provided in \S\ref{app:math-analysis}.

\paragraph{DistMult Variant.}
\label{app:distmult}
As a simplified variant, setting $k{=}1$ and removing the inverse relation table recovers a time-conditioned DistMult~\citep{DBLP:journals/corr/YangYHGD14a} backbone. This variant uses self-adversarial negative sampling instead of the 1-vs-all cross-entropy loss: $\mathcal{L}_{\mathrm{triple}} = -\log\sigma(s^{+}) - \sum_{k} w_k \log\sigma(-s^{-}_k)$, where $w_k = \mathrm{softmax}(s^{-}_k / T)$ are self-adversarial weights~\citep{DBLP:conf/iclr/SunDNT19}. For regularisation, we use global L3 on the full embedding tables instead of per-batch N3.

\subsection{Training Objectives}
\label{app:triple-loss}

\paragraph{Structural Triple Loss.}
With the ChronoR backbone, we use a 1-vs-all cross-entropy loss exploiting the unrotation trick (\S\ref{app:unrotation}). The unrotated query $\mathbf{q}_t(\tau) = \mathrm{Rot}\bigl(\mathrm{Rot}(\mathbf{e}_h, \boldsymbol{\theta}_r(\tau)) \odot \mathbf{w}_r \odot \hat{\mathbf{w}}_r,\, -\boldsymbol{\theta}_r(\tau)\bigr)$ lives in the same space as the raw entity embeddings, enabling a single matrix multiplication $\mathbf{q}_t(\tau) \cdot \mathbf{E}^{\top}$ to score all entities. Crucially, $\mathbf{q}_t(\tau)$ remains $\tau$-dependent because the relation weights break the symmetry between the forward and inverse rotations; consequently, the same $(h,r)$ pair produces distinct query vectors at different timestamps, allowing the structural loss alone to learn temporal discrimination:
\begin{equation}
\begin{split}
\mathcal{L}_{\mathrm{triple}} = \tfrac{1}{2}\bigl(
&\mathrm{CE}_{\mathrm{tail}}(\mathbf{q}_t(\tau) \cdot \mathbf{E}^{\top}, t)\\
+\,&\mathrm{CE}_{\mathrm{head}}(\mathbf{q}_h(\tau) \cdot \mathbf{E}^{\top}, h)\bigr)
\end{split}
\end{equation}
where $\mathrm{CE}$ is the standard cross-entropy over the full entity vocabulary and $\mathbf{q}_h(\tau)$ is the symmetric head-prediction query. This loss is computed with the gate $\alpha_r$ \emph{detached}, so gradients update only the entity, relation, and time parameters.

\paragraph{Conflict-Aware Negative Sampling.}
\label{app:conflict-neg}
We enhance both training variants with \textit{conflict-aware negative sampling}: rather than sampling random entities, we prioritise sampling competing tails from the same $(h,r)$ group when available. This forces the model to discriminate between mutually exclusive facts (e.g., Obama born in \textit{Hawaii} vs. \textit{Kenya}) rather than easy negatives.

\paragraph{Regularisation.}
\label{app:regularisation}
We apply backbone-specific embedding regularisation.
For ChronoR, we use per-batch N3 regularisation (fourth-power penalty): $\mathcal{L}_{\mathrm{reg}} = \lambda_r \bigl(\|\mathbf{e}_h\|_4^4 + \|\mathbf{w}_r\|_4^4 + \|\hat{\mathbf{w}}_r\|_4^4 + \|\mathbf{e}_t\|_4^4\bigr) / B$, where $B$ is the batch size.
For the DistMult variant, we use global L3 regularisation on the full embedding tables.
Additionally, a gate regularisation term $\mathcal{L}_{\mathrm{gate\_reg}} = \overline{\alpha}_r^{\,\text{non-comp}}$ softly encourages $\alpha_r \to 0$ for non-competing relation slots, reinforcing the temporal clutch's static behaviour where no temporal discrimination is needed.

\paragraph{Time-Contrastive Loss.}
\label{app:time-loss}
The listwise time-contrastive loss $\mathcal{L}_{\mathrm{time}}$ (Equation~\ref{eq:time-contrastive-loss} in \S\ref{sec:training}) uses a Gaussian target kernel whose width $\sigma$ follows a cosine curriculum from $\sigma_{\text{start}} = 0.5$\,yr to $\sigma_{\text{end}} = 0.02$\,yr over the configured decay epochs (Table~\ref{tab:tkge-hyperparams}), progressively sharpening temporal discrimination.
Negative times are sampled preferentially from the same $(h, r)$ slot history when available, with jitter ($\pm 0.02$ years) and one forced far negative ($\pm 365$ days).
A minimum-gap curriculum decays from 90 to 3 days over 60 epochs to avoid trivially easy negatives early in training.

\subsection{Semantic Speed Gate Pretraining}
\label{app:gate-pretrain}

The semantic speed gate $\alpha_r = \text{MLP}(\phi(r))$ is pretrained on self-supervised transition observations mined from ICEWS05-15 before online TKGE training begins.

\paragraph{Stage 1: Transition Mining.}
We group all temporal triples by $(h, r)$ slot and identify consecutive temporal observations.
For each pair of adjacent observations at times $t_i$ and $t_{i+1}$, we record a binary \texttt{changed} label indicating whether the tail entity changed.
Filtering criteria: minimum 3 observations per slot, maximum functional ratio 0.5, and up to 256 pairs per slot.

\paragraph{Stage 2: Gate Training.}
The gate MLP is trained with the rotation-based BCE objective defined in Equation~\ref{eq:gate-pretrain} (\S\ref{sec:training}), with $p_{\text{change}}$ clamped to avoid numerical overflow.
Training uses 100 epochs with learning rate $5 \times 10^{-4}$, embedding batch size 64, and class-weighted loss with auto-computed weights.
The pretrained checkpoint is loaded and frozen at the start of online TKGE training.

%% file: appendices/prompts.tex
% ---------------------------------------------------------------
%  Appendix: Prompts
% ---------------------------------------------------------------

\section{Prompts}
\label{app:prompts}

\subsection{Answer Generation}
\label{app:answer-gen}

\paragraph{Answer LLM.}
All downstream answer generation uses GPT-5.2 with \texttt{temperature=0} and JSON response format $\{$\texttt{"answer": "<text>"}$\}$.
The answer LLM is always routed to the OpenAI API, independent of whether the graph construction LLM is hosted locally via vLLM.

\paragraph{System prompts.}
The system prompt varies by benchmark:
\begin{itemize}[leftmargin=2em]
    \item \textbf{MultiTQ, DMR-MSC}: ``You answer questions using only provided facts.''
    \item \textbf{LoCoMo}: ``You answer using only the provided context.''
    \item \textbf{FinTMMBench}: ``You are a financial analyst assistant. Use only the provided financial data to answer the question. Be concise and precise.''
\end{itemize}

Box~\ref{box:answer-prompt-multitq} and Box~\ref{box:answer-prompt-fintmm} show the user prompt templates for MultiTQ and FinTMMBench, respectively.

\stepbox{box:answer-prompt-multitq}
\begin{promptbox}{Answer Generation Prompt (MultiTQ / DMR-MSC)}
\begin{lstlisting}[
basicstyle=\ttfamily\small,
breaklines=true,
breakindent=0pt,
columns=fullflexible,
keepspaces=true,
showstringspaces=false
]
Question: {question}

Supporting facts:
{bullet_list_of_retrieved_documents}

Provide a concise answer grounded in the facts.
Respond with JSON {"answer": "<text>"}.
\end{lstlisting}
\end{promptbox}

\stepbox{box:answer-prompt-fintmm}
\begin{promptbox}{Answer Generation Prompt (FinTMMBench)}
\begin{lstlisting}[
basicstyle=\ttfamily\small,
breaklines=true,
breakindent=0pt,
columns=fullflexible,
keepspaces=true,
showstringspaces=false
]
Question: {question}

Financial data:
{bullet_list_of_retrieved_documents}

Provide a short answer grounded in the data.
Respond with JSON {"answer": "<text>"}.
\end{lstlisting}
\end{promptbox}

\paragraph{LoCoMo answer prompts.}
For LoCoMo, we follow the original benchmark protocol~\citep{maharana-etal-2024-evaluating}:
\begin{itemize}[leftmargin=2em]
    \item \textbf{Categories 1-4} (open-ended): \texttt{"Based on the above context, write an answer in the form of a short phrase for the following question. Answer with exact words from the context whenever possible. Question: \{q\} Short answer:"}
    \item \textbf{Category 5} (adversarial): \texttt{"Based on the above context, answer the following question. Question: \{q\} Short answer:"}
    \item \textbf{Multiple choice}: \texttt{"Based on the above context, choose the best answer from the options below. Respond with the exact choice text. Question: \{q\} Choices: \{choices\} Answer:"}
\end{itemize}
All LoCoMo responses are appended with: \texttt{Respond with JSON \{"answer": "<text>"\}.}

% ---------------------------------------------------------------
\subsection{Information Extraction}
\label{app:ie-prompts}

\ourapproach uses three LLM-based extraction stages during knowledge graph construction: named entity recognition (Appendix~\ref{app:ner-prompt}), temporal triple extraction (Appendix~\ref{app:triple-prompt}), and query-time entity and time extraction (Appendix~\ref{app:query-ner-prompt}, Appendix~\ref{app:time-extraction-prompt}).

\subsubsection{Named Entity Recognition (NER)}
\label{app:ner-prompt}

Box~\ref{box:ner-prompt} shows the one-shot NER prompt used to extract entities from each passage during graph construction.

\stepbox{box:ner-prompt}
\begin{promptbox}{Named Entity Recognition Prompt (One-Shot)}
\textbf{System:}
\begin{lstlisting}[
basicstyle=\ttfamily\small,
breaklines=true,
breakindent=0pt,
columns=fullflexible,
keepspaces=true,
showstringspaces=false
]
Your task is to extract named entities from the given
paragraph. Respond with a JSON list of entities.

Rules:
- Only include real-world entities, people, organizations, locations, products, and named events.
- Do not include time expressions, dates, durations, or clock times (these are handled separately).
- Do not include numbers that are only quantities or durations (e.g., "5 years").

One-shot example:

User: "Radio City is India's first private FM radio station and was started on 3 July 2001\ldots"

Assistant: {"named_entities": ["Radio City", "India", "Hindi", "English", "PlanetRadiocity.com"]}
\end{lstlisting}
\end{promptbox}

\subsubsection{Temporal Triple Extraction (OpenIE)}
\label{app:triple-prompt}

Box~\ref{box:triple-prompt} shows the system prompt for NER-conditioned triple extraction with temporal metadata.
This prompt is paired with three few-shot examples covering standard extraction, relative time resolution, and duration inference.

\stepbox{box:triple-prompt}
\begin{promptbox}{Temporal Triple Extraction System Prompt}
\begin{lstlisting}[
basicstyle=\ttfamily\small,
breaklines=true,
breakindent=0pt,
columns=fullflexible,
keepspaces=true,
showstringspaces=false
]
Your task is to construct an RDF graph from the given passages and named entity lists.
Respond with JSON where each triple entry also carries timing metadata.

Requirements:
- Represent each triple as an object with fields: head, relation, tail, text_time, observed_time.
- text_time must be a normalized date string in YYYY-MM-DD, or "" if no time is mentioned.
- If only a month or year is mentioned, use the first day of that month or year.
- If the year is missing, use the year from observed_time.
- Resolve relative expressions using the closest explicit date in the passage. If none exists, use observed_time.
- observed_time is always set to the provided observed_time string.
- Do not infer a time if the passage does not contain a temporal expression.
- Do NOT use time expressions as head or tail entities; keep time only in text_time.
- Each triple should contain at least one, but preferably two, of the named entities in the list.
- Resolve pronouns to their specific names.
- Extract all factual relations, including actions, states, plans, and events.
- Do not omit facts; prioritize completeness over brevity.
- Duration handling: if the passage states a duration such as "for 5 years" and an explicit reference date is present, infer the start date and put it in text_time.
\end{lstlisting}
\vspace{2pt}
\textbf{Output format:}
\begin{lstlisting}[
basicstyle=\ttfamily\small,
breaklines=true,
breakindent=0pt,
columns=fullflexible,
keepspaces=true,
showstringspaces=false
]
{"triples": [
  {"head": "X", "relation": "Y", "tail": "Z",
   "text_time": "YYYY-MM-DD",
   "observed_time": "2024-01-01T00:00:00Z"}
]}
\end{lstlisting}
\end{promptbox}

\subsubsection{Query-Time Entity Extraction}
\label{app:query-ner-prompt}

At query time, we extract named entities from the question to initialise graph traversal.

\stepbox{box:query-ner-prompt}
\begin{promptbox}{Query NER Prompt (One-Shot)}
\textbf{System:} 
\begin{lstlisting}[
basicstyle=\ttfamily\small,
breaklines=true,
breakindent=0pt,
columns=fullflexible,
keepspaces=true,
showstringspaces=false
]
"You're a very effective entity extraction system."
\end{lstlisting}
\textbf{User:} 
\begin{lstlisting}[
basicstyle=\ttfamily\small,
breaklines=true,
breakindent=0pt,
columns=fullflexible,
keepspaces=true,
showstringspaces=false
]
Please extract all named entities that are important for solving the questions below. Place the named entities in json format.
Question: Which magazine was started first Arthur's Magazine or First for Women?
\end{lstlisting}
\textbf{Assistant:}
\begin{lstlisting}[
basicstyle=\ttfamily\small,
breaklines=true,
breakindent=0pt,
columns=fullflexible,
keepspaces=true,
showstringspaces=false
]
{"named_entities": ["First for Women", "Arthur's Magazine"]}
\end{lstlisting}
\end{promptbox}

\subsubsection{Time Extraction}
\label{app:time-extraction-prompt}

At query time, we extract temporal constraints and ordering intent from the question.

\stepbox{box:time-extraction-prompt}
\begin{promptbox}{Time Extraction Prompt}
\textbf{System:}
\begin{lstlisting}[
basicstyle=\ttfamily\small,
breaklines=true,
breakindent=0pt,
columns=fullflexible,
keepspaces=true,
showstringspaces=false
]
"You extract the time constraint from a query. Return a single line only."
\end{lstlisting}
\textbf{User:}
\begin{lstlisting}[
basicstyle=\ttfamily\small,
breaklines=true,
breakindent=0pt,
columns=fullflexible,
keepspaces=true,
showstringspaces=false
]
Given the query, output the time constraint, temporal
ordering intent, and whether the query asks for a time.
Return exactly one line in this format:
time=YYYY-MM-DD; ordering=earliest|latest|none;
time_request=yes|no

Rules:
- If a time constraint exists, return it as YYYY-MM-DD.
- If the query only specifies a month or year, use the first day of that month or year.
- If the query omits the year, use the year from the reference date.
- Resolve relative expressions using the reference date.
- If no time constraint exists, return time=NONE.
- ordering=earliest for queries like "earliest/first/oldest".
- ordering=latest for queries like "latest/most recent/last time".
- ordering=none otherwise.
- time_request=yes if the query asks for a time as the answer (e.g., when/what year/which year/what date).
- time_request=no if the query only uses time as a constraint or does not ask for time.

Reference date (UTC): {reference_date}
Query: {query}
\end{lstlisting}
\end{promptbox}

%% file: appendices/datasets.tex
% ---------------------------------------------------------------
%  Appendix: Benchmark Datasets
% ---------------------------------------------------------------

\section{Benchmark Datasets}
\label{app:datasets}

Table~\ref{tab:dataset-stats} summarises the key statistics for each dataset.
Detailed descriptions follow.

\begin{table*}[h]
\centering
\small
\caption{Summary statistics of evaluation datasets.}
\label{tab:dataset-stats}
\begin{tabular}{llrrp{3.2cm}}
\toprule
\textbf{Dataset} & \textbf{RQ} & \textbf{Facts/Docs} & \textbf{Queries} & \textbf{Task Type} \\
\midrule
ICEWS05-15  & RQ1 & 461{,}329  & 46{,}092      & TKG completion \\
MultiTQ     & RQ2 & 11{,}074  & 500     & Temporal KGQA \\
LoCoMo      & RQ2 & 10 conv.   & 1{,}986 & Conv.\ memory QA \\
DMR-MSC     & RQ2 & 500 dial.  & 500     & Dialogue memory QA \\
FinTMMBench & RQ3 & 908 docs$^\dagger$  & 100$^\dagger$ & Financial temporal QA \\ 
\bottomrule
\end{tabular}
\vspace{2pt}

\raggedright{\footnotesize $^\dagger$Stratified sample: 100 questions with a corpus of 908 documents (227 gold sources + 681 randomly sampled non-gold documents).} 
\end{table*}

\paragraph{ICEWS05-15~\citep{garcia-duran-etal-2018-learning}.}
The Integrated Crisis Early Warning System (ICEWS) dataset contains geopolitical event triples spanning from 2005 to 2015.
Each triple takes the form \texttt{(head, relation, tail, date)} with dates in \texttt{YYYY-MM-DD} format.
The standard split comprises 368{,}962 training, 46{,}275 validation, and 46{,}092 test triples.
We use this dataset exclusively for RQ1 to verify that our functional temporal modelling preserves standard TKGE accuracy.

\paragraph{MultiTQ~\citep{chen-etal-2023-multi}.}
The full benchmark builds on the ICEWS05-15 KG, which contains 461{,}329 temporal quadruples at 4{,}017 timestamps. Its test set contains 54{,}584 questions at day, month, and year resolution.
Our end-to-end agentic-memory evaluation uses a fixed set of questions: the first 500 entries in the released \texttt{test.json} (IDs 3000000-3000499). We selected this set before running any method and did not filter questions based on their results. We process only the corresponding time snapshots, yielding 11{,}074 facts ingested incrementally per snapshot.
Evaluating all 54{,}584 questions across ten method/configuration combinations with both top-5 and top-10 contexts would require at least 1{,}091{,}680 answer-generation calls, excluding framework-specific graph-construction calls. We therefore use the fixed 500-question set for the end-to-end evaluation.
As shown in Table~\ref{tab:multitq-subset-distribution}, the question- and answer-type distributions differ from those of the full test set by at most 2.4 percentage points.

\begin{table}[h]
\centering
\small
\caption{Distribution of the full MultiTQ test set and the fixed 500-question evaluation subset.}
\label{tab:multitq-subset-distribution}
\begin{tabular}{lrr}
\toprule
\textbf{Category} & \textbf{Full test set} & \textbf{Fixed subset} \\
\midrule
\texttt{equal} & 17{,}311 (31.71\%) & 155 (31.00\%) \\
\texttt{before\_after} & 11{,}073 (20.29\%) & 102 (20.40\%) \\
\texttt{first\_last} & 10{,}480 (19.20\%) & 108 (21.60\%) \\
\texttt{equal\_multi} & 3{,}207 (5.88\%) & 23 (4.60\%) \\
\texttt{after\_first} & 6{,}266 (11.48\%) & 49 (9.80\%) \\
\texttt{before\_last} & 6{,}247 (11.44\%) & 63 (12.60\%) \\
\midrule
Entity answer & 38{,}700 (70.90\%) & 345 (69.00\%) \\
Time answer & 15{,}884 (29.10\%) & 155 (31.00\%) \\
\bottomrule
\end{tabular}
\end{table}

\paragraph{LoCoMo~\citep{maharana-etal-2024-evaluating}.}
Long-Context Conversational Memory benchmark consisting of 10 synthetic multi-session conversations with 1{,}986 question-answer pairs.
Questions are divided into five categories:
(1)~\textit{Single-Hop}: direct fact retrieval;
(2)~\textit{Multi-Hop}: multi-step reasoning;
(3)~\textit{Temporal Reasoning}: time-sensitive queries requiring date resolution;
(4)~\textit{Open Domain}: general knowledge queries; and
(5)~\textit{Adversarial}: queries about information not present in the conversations.
Each question is accompanied by evidence references (e.g., \texttt{D1:3}) pointing to specific conversation segments.

\paragraph{DMR-MSC~\citep{packer2024memgptllmsoperatingsystems}.}
The Dynamic Memory Retrieval Multi-Session Chat dataset contains 500 multi-session dialogues with self-instructed question-answer pairs.
Each example includes persona statements, dialog turns with speaker identities, and temporal context via \texttt{time\_back} annotations (e.g., ``14 days'').
We use this largely non-temporal conversational benchmark to check that temporal reranking does not reduce standard memory retrieval performance.

\paragraph{FinTMMBench~\citep{10.1145/3746027.3755723}.}
Financial Temporal Multi-Modal Benchmark containing 5{,}676 question-answer pairs over NASDAQ-100 companies.
The corpus comprises 162{,}311 documents across four modalities:
\textit{News} (3{,}143 articles),
\textit{FinancialTable} (35{,}038 indicator records),
\textit{StockPrice} (124{,}130 price records), and
\textit{Chart} (vision-based, excluded from our text-only evaluation).
Each question references specific date ranges (e.g., ``from 2022-06-27 to 2022-09-23'') and gold source document UUIDs for provenance evaluation.
Question subtasks include \textit{Extraction}, \textit{Calculation}, \textit{Sentiment}, and \textit{Trend} analysis.
To enable feasible evaluation, we use a stratified sample of 100 questions and a reduced corpus of 908 documents, comprising 227 gold sources and 681 randomly sampled non-gold documents.

%% file: appendices/licenses.tex
% ---------------------------------------------------------------
%  Appendix: Licenses and Terms of Use
% ---------------------------------------------------------------

\section{Licenses and Terms of Use}
\label{app:licenses}

The following terms are those stated in the official upstream releases at the time of access. ``Not specified'' means that we found no explicit license in the official repository or dataset card; it should not be interpreted as permission for unrestricted reuse. We use all third-party resources only for the research reported here. Our release does not relicense these resources, and users should obtain them from the linked upstream sources and comply with their terms. A repository license also does not override any separate rights in third-party source material contained in a dataset.

\paragraph{Datasets.}
\begin{itemize}[leftmargin=1.5em,itemsep=1pt,topsep=2pt]
    \item \href{https://doi.org/10.7910/DVN/28075}{\textbf{ICEWS05-15}}: The source ICEWS Coded Event Data is available for research and educational use only; commercial use is prohibited. We use the derived ICEWS05-15 split for gate pretraining and RQ1 evaluation.
    \item \href{https://github.com/czy1999/MultiTQ}{\textbf{MultiTQ}}: Not specified. Its ICEWS-based knowledge graph remains subject to the ICEWS terms above.
    \item \href{https://github.com/snap-research/locomo/blob/main/LICENSE.txt}{\textbf{LoCoMo}}: Creative Commons Attribution--NonCommercial 4.0 International (CC BY-NC 4.0).
    \item \href{https://huggingface.co/datasets/MemGPT/MSC-Self-Instruct}{\textbf{DMR-MSC}}: Apache License 2.0, as stated by the official MSC-Self-Instruct dataset card.
    \item \href{https://github.com/lijunfeng99/FinTMMBench/blob/main/LICENSE}{\textbf{FinTMMBench}}: MIT License for the official repository release; rights in any underlying third-party source material remain with the respective providers.
\end{itemize}

\paragraph{Agentic-memory baselines.}
\begin{itemize}[leftmargin=1.5em,itemsep=1pt,topsep=2pt]
    \item \href{https://github.com/mem0ai/mem0/blob/main/LICENSE}{\textbf{Mem0}}: Apache License 2.0.
    \item \href{https://github.com/getzep/graphiti/blob/main/LICENSE}{\textbf{Zep/Graphiti}}: Apache License 2.0. We evaluate Graphiti, the open-source temporal graph engine associated with Zep, rather than the managed Zep service.
    \item \href{https://github.com/WujiangXu/A-mem/blob/main/LICENSE}{\textbf{A-Mem}}: MIT License.
    \item \href{https://github.com/EverM0re/LiCoMemory}{\textbf{LiCoMemory}}: Not specified. We therefore do not treat its public repository as permission to redistribute its source code.
    \item \href{https://github.com/OSU-NLP-Group/HippoRAG/blob/main/LICENSE}{\textbf{HippoRAG}}: MIT License.
\end{itemize}

The TKGE comparison values in Table~\ref{tab:tkge_results} are taken from the cited papers, as stated in its caption, rather than from redistributed third-party implementations. Model and API access (including GPT-5-mini, text-embedding-3-small, LLaMA-3.1-70B, and BGE-M3) remains subject to the corresponding providers' terms.

%% file: appendices/metrics.tex
% ---------------------------------------------------------------
%  Appendix: Evaluation Metrics and Answer Verification
% ---------------------------------------------------------------

\section{Evaluation Metrics and Answer Verification}
\label{app:metrics}

\paragraph{Temporal KG Completion (RQ1).}
For ICEWS05-15, we report Mean Reciprocal Rank (MRR) and Hits@$k$ for $k \in \{1, 3, 10\}$, following the standard filtered setting:
\begin{equation}
\text{MRR} = \frac{1}{|Q|} \sum_{q \in Q} \frac{1}{\text{rank}(q)}
\end{equation}
\begin{equation}
\text{Hits@}k = \frac{1}{|Q|} \sum_{q \in Q} \mathbb{1}[\text{rank}(q) \leq k]
\end{equation}

\paragraph{Agentic Memory Retrieval (RQ2, RQ3).}
For MultiTQ and DMR-MSC, we report MRR and Hits@$k$ ($k \in \{1, 3, 10\}$) computed over retrieved facts, where each query has a single gold answer:
\begin{equation}
\text{Hits@}k = \mathbb{1}[\exists\, i \leq k : \text{doc}_i \in \mathcal{G}]
\end{equation}
For LoCoMo, the main table reports Recall@10 per question category, computed as the fraction of gold evidence passages found in the top-10 retrieved documents:
\begin{equation}
\text{Recall@}k = \frac{|\{d \in \text{top-}k : d \in \mathcal{G}\}|}{|\mathcal{G}|}
\end{equation}
For FinTMMBench, we report Recall@$k$ for $k \in \{1, 3, 5, 10\}$ and MRR, computed against gold source document UUIDs. Appendix~\ref{app:additional-evaluation} additionally reports aggregate LoCoMo partial-match F1.

\paragraph{Answer Quality.}
We evaluate downstream answer quality using LLM@$k$ Accuracy, where $k$ denotes the number of retrieved documents provided to the answer LLM.
For DMR-MSC and FinTMMBench, a generated answer is scored as correct via the two-stage LLM judge pipeline described in Appendix~\ref{app:llm-judge}.
For MultiTQ, we instead use a rule-based cascading verifier (Appendix~\ref{app:multitq-verifier}) following the original benchmark protocol.
We report LLM@$k$ Accuracy at context sizes $k \in \{5, 10\}$.
Appendix~\ref{app:additional-evaluation} re-evaluates the saved DMR-MSC and FinTMMBench outputs with a stricter conflict-aware judge and reports a manual audit of unresolved alternatives.

% ---------------------------------------------------------------
\subsection{LLM Judge}
\label{app:llm-judge}

We use a two-stage answer evaluation pipeline.
\textbf{Stage~1} (fast path): if the lowercased, whitespace-normalised gold answer is a substring of the generated answer, the answer is immediately labelled \textsc{correct}.
\textbf{Stage~2} (LLM fallback): for non-matching answers, we invoke GPT-5.2 as a judge using the prompt shown in Box~\ref{box:judge-prompt}.
The judge is called with \texttt{temperature=0} and JSON response format $\{$\texttt{"label": "CORRECT"|"WRONG"}$\}$.

\stepbox{box:judge-prompt}
\begin{promptbox}{LLM Judge Prompt}
\begin{lstlisting}[
basicstyle=\ttfamily\small,
breaklines=true,
breakindent=0pt,
columns=fullflexible,
keepspaces=true,
showstringspaces=false
]
Your task is to label an answer to a question as `CORRECT' or `WRONG'. You will be given:
    (1) a question (posed by one user to another user),
    (2) a `gold' (ground truth) answer,
    (3) a generated answer
which you will score as CORRECT/WRONG.

The point of the question is to ask about something one user should know about the other user based on their prior conversations. The gold answer will usually be a concise and short answer that includes the referenced topic, for example:

Question: Do you remember what I got the last time I went to Hawaii?
Gold answer: A shell necklace

The generated answer might be much longer, but you should be generous with your grading - as long as it touches on the same topic as the gold answer, it should be counted as CORRECT.

For time related questions, the gold answer will be a specific date, month, year, etc. The generated answer might be much longer or use relative time references (like "last Tuesday" or "next month"), but you should be generous with your grading - as long as it refers to the same date or time period as the gold answer, it should be counted as CORRECT. Even if the format differs (e.g., "May 7th" vs "7 May"), consider it CORRECT if it's the same date.

Now it's time for the real question:
Question: {question}
Gold answer: {gold_answer}
Generated answer: {generated_answer}

First, provide a short (one sentence) explanation of your reasoning, then finish with CORRECT or WRONG. Do NOT include both CORRECT and WRONG in your response, or it will break the evaluation script.

Just return the label CORRECT or WRONG in a json format
with the key as "label".
\end{lstlisting}
\end{promptbox}

% ---------------------------------------------------------------
\subsection{MultiTQ Answer Verifier}
\label{app:multitq-verifier}

For MultiTQ, answer correctness is determined by a cascading multi-strategy rule-based verifier (rather than the LLM judge used for other benchmarks), adopted from \citet{tan2026memotimememoryaugmentedtemporalknowledge} to handle the diverse answer formats in temporal KGQA (entities, dates at varying granularities, and multi-part answers).
The strategies are applied in order; the first match determines the verdict:
\begin{enumerate}[leftmargin=2em]
    \item \textbf{Exact match}: normalised entity string equality.
    \item \textbf{Containment}: bidirectional substring check (gold $\subseteq$ prediction or prediction $\subseteq$ gold).
    \item \textbf{Advanced normalisation}: strip prefixes (e.g., ``The''), brackets, and punctuation, then substring match.
    \item \textbf{Time format matching}: year-month level matching with month-name support (e.g., ``January 2013'' $\approx$ ``2013-01'').
    \item \textbf{Multi-answer}: comma-separated answer parts are matched individually.
    \item \textbf{Semantic overlap}: word overlap $>50\%$ between prediction and gold answer tokens.
    \item \textbf{Loose match}: remove all spaces and underscores, then substring match.
\end{enumerate}

%% file: appendices/additional-evaluation.tex
\section{Additional Evaluation Analyses}
\label{app:additional-evaluation}

\subsection{Component Ablation Details}
\label{app:component-ablation-details}

Table~\ref{tab:component-ablation-full} reports all metrics for the MultiTQ ablation summarised in Table~\ref{tab:component-ablation}. Every variant uses the same fixed set of 500 questions, the complete graph of 461{,}329 facts, cached embeddings, ChronoR scoring dimension, 200 training epochs, reader, verifier, and optimisation settings.

\begin{table*}[t]
\centering
\small
\caption{Full single-factor ablation on MultiTQ. ``No gate'' uses one temporal speed for every relation; ``discrete time'' replaces functional time with learned timestamp lookups.}
\label{tab:component-ablation-full}
\begin{tabular}{lrrrrrr}
\toprule
\textbf{Variant} & \textbf{MRR} & \textbf{H@1} & \textbf{H@3} & \textbf{H@10} & \textbf{Acc@5} & \textbf{Acc@10} \\
\midrule
Full \ourapproach & \textbf{0.3371} & \textbf{0.252} & \textbf{0.384} & 0.502 & \textbf{0.366} & \textbf{0.392} \\
w/o time contrastive & 0.3254 & 0.238 & 0.368 & \textbf{0.504} & 0.340 & 0.390 \\
w/o semantic gate & 0.1044 & 0.084 & 0.116 & 0.142 & 0.100 & 0.108 \\
Discrete time & 0.0452 & 0.042 & 0.046 & 0.048 & 0.040 & 0.040 \\
\bottomrule
\end{tabular}
\end{table*}

\subsection{LoCoMo Answer-Level F1}

Table~\ref{tab:locomo-f1} reports partial-match F1 over all 1{,}986 LoCoMo questions, using the same reader outputs as the retrieval evaluation. Although \ourapproach improves Recall@10 in Table~\ref{tab:locomo_sub}, it matches HippoRAG at 0.457 F1 in the OpenAI configuration and improves F1 by only 0.001 in the server configuration. The original LoCoMo study reports a similar gap between evidence recall and answer F1, which it attributes to the reader's difficulty in using a larger or noisier context~\citep{maharana-etal-2024-evaluating}.

\begin{table}[h]
\centering
\small
\caption{Aggregate LoCoMo partial-match F1. ``OpenAI'' and ``Server'' refer to the graph-construction and embedding configurations; the answer reader is fixed.}
\label{tab:locomo-f1}
\begin{tabular}{lcc}
\toprule
\textbf{Method} & \textbf{OpenAI F1} & \textbf{Server F1} \\
\midrule
Zep & 0.343 & 0.320 \\
Mem0 & 0.457 & 0.444 \\
A-Mem & \textbf{0.459} & 0.382 \\
LicoMemory & 0.346 & 0.306 \\
HippoRAG & 0.457 & 0.446 \\
\ourapproach & 0.457 & \textbf{0.447} \\
\bottomrule
\end{tabular}
\end{table}

\subsection{Conflict-Aware Judge Robustness}

We use the answer-level judge only for DMR-MSC and FinTMMBench. ICEWS05-15, MultiTQ retrieval, MultiTQ answer accuracy, and LoCoMo retrieval use deterministic or gold-evidence metrics. To test whether the wording of the original judge accepts unresolved contradictions, we evaluated the same saved answers with a stricter prompt. We did not rerun retrieval or answer generation. The stricter judge requires the answer to state the gold fact and marks unresolved conflicting alternatives as incorrect.

As shown in Table~\ref{tab:strict-judge}, \ourapproach remains the top-ranked method on both benchmarks. The original and strict judges agree on 98.8\% of the 500 DMR-MSC answers and 98.0\% of the 100 FinTMMBench answers. We also inspected answers that contained conflicting candidates. Three \ourapproach answers on DMR-MSC contained unresolved alternatives, and the original judge accepted two of them (0.4\% of the evaluation set). We found no such case on FinTMMBench.

\begin{table*}[h]
\centering
\small
\caption{Acc@5 under the original and stricter conflict-aware answer judges. All saved answers are identical across judge protocols.}
\label{tab:strict-judge}
\begin{tabular}{lcccc}
\toprule
\textbf{Method} & \textbf{DMR-MSC original} & \textbf{DMR-MSC strict} & \textbf{FinTMMBench original} & \textbf{FinTMMBench strict} \\
\midrule
Zep & 0.302 & 0.278 & 0.480 & 0.470 \\
Mem0 & 0.858 & 0.834 & 0.550 & 0.540 \\
A-Mem & 0.848 & 0.830 & 0.540 & 0.540 \\
LicoMemory & 0.670 & 0.604 & 0.480 & 0.480 \\
HippoRAG & 0.852 & 0.832 & 0.550 & 0.550 \\
\ourapproach & \textbf{0.862} & \textbf{0.850} & \textbf{0.580} & \textbf{0.560} \\
\bottomrule
\end{tabular}
\end{table*}

%% file: appendices/extended-experimental-setup.tex
% ---------------------------------------------------------------
%  Appendix: Experimental Setup (configs + baselines + hyperparams)
% ---------------------------------------------------------------

\section{Implementation Configurations and Hyperparameters}
\label{app:impl-configs-and-hyperparameters}

\subsection{Configurations}
\label{app:impl-configs}

We evaluate all agentic memory benchmarks under two implementation configurations:

\begin{enumerate}[leftmargin=2em]
    \item \textbf{OpenAI}: GPT-5-mini for graph construction (NER + OpenIE) and text-embedding-3-small for embedding. This configuration tests performance with API-based models.
    \item \textbf{Server}: LLaMA-3.1-70B-Instruct~\citep{llama3} served via vLLM for graph construction and BAAI/BGE-M3~\citep{chen-etal-2024-m3} for embedding. This configuration demonstrates open-source reproducibility.
\end{enumerate}

In both configurations, the \textbf{answer LLM} and \textbf{LLM judge} always use GPT-5.2 via the OpenAI API to ensure fair comparison across all baselines.

\paragraph{Baseline systems.}
We compare against:
\begin{itemize}[leftmargin=2em]
    \item \textbf{Mem0}~\citep{chhikara2025mem0buildingproductionreadyai}: FAISS-based vector memory with per-document embedding and search.
    \item \textbf{Zep}~\citep{rasmussen2025zeptemporalknowledgegraph}: Temporal knowledge graph with Neo4j backend and entity extraction.
    \item \textbf{HippoRAG}~\citep{gutiérrez2024hipporag,gutierrez2025hipporag2}: Knowledge graph-augmented RAG with Personalised PageRank retrieval and Neo4j backend. \ourapproach builds upon HippoRAG's graph construction pipeline.
\end{itemize}

All baselines use the same answer LLM, LLM judge, and evaluation metrics for fair comparison.
We use the same temporal configuration for every target benchmark and do not tune hyperparameters separately for each dataset. The Semantic Speed Gate remains frozen; training on the target graph updates only the entity and relation embeddings and the global temporal-spectrum parameters.

\subsection{TKGE Hyperparameters}
\label{app:tkge-hyperparams}

Table~\ref{tab:tkge-hyperparams} reports the full TKGE hyperparameter configuration used across all experiments.

\begin{table*}[h]
\centering
\small
\caption{TKGE hyperparameters.
Parameters above the mid-rule are core KGE settings; below are temporal extension parameters.}
\label{tab:tkge-hyperparams}
\begin{tabular}{lcc}
\toprule
\textbf{Parameter} & \textbf{Value} & \textbf{Description} \\
\midrule
\texttt{temporal\_backbone} & \texttt{chronor} & ChronoR rotation backbone \\
\texttt{chronor\_k} & 3 & Number of rotation sub-spaces \\
\texttt{gamma} & 200.0 & Embedding range: $({\gamma + 2})/{\text{dim}}$ \\
\texttt{adversarial\_temperature} & 1.0 & Self-adversarial sampling temperature \\
\texttt{regularization\_weight} & $10^{-5}$ & N3 per-batch regularization (ChronoR) \\
\texttt{steps\_per\_update} & 500 & Training epochs per update cycle \\
\texttt{num\_conflict\_negatives} & 1 & Tails from same $(h, r)$ group \\
\midrule
\texttt{time\_source} & \texttt{happen} & Use \texttt{text\_time} for temporal scoring \\
\texttt{time\_loss\_type} & \texttt{listwise} & Distribution-matching time loss \\
\texttt{time\_contrastive\_weight} & 0.5 & Time-contrastive loss weight ($\lambda_{\text{time}}$) \\
\texttt{num\_time\_negatives} & 8 & Negative time samples per fact \\
\texttt{time\_sigma\_years} & 0.25 & Gaussian kernel $\sigma$ (years) \\
\texttt{time\_sigma\_years\_start} & 0.5 & Curriculum start $\sigma$ \\
\texttt{time\_sigma\_years\_end} & 0.02 & Curriculum end $\sigma$ \\
\texttt{time\_neg\_jitter\_years} & 0.02 & Temporal jitter for negatives \\
\texttt{time\_neg\_far\_days} & 365 & Far negative offset (days) \\
\texttt{time\_neg\_min\_days\_start} & 90 & Curriculum start min-gap (days) \\
\texttt{time\_neg\_min\_days\_end} & 3 & Curriculum end min-gap (days) \\
\texttt{time\_neg\_min\_days\_decay} & 60 & Min-gap curriculum decay (epochs) \\
\texttt{tkge\_weight} ($\alpha_g$) & 0.3 & Temporal reranking strength \\
\bottomrule
\end{tabular}
\end{table*}

%% file: appendices/math-analysis.tex
\section{Theoretical Analysis}
\label{app:math-analysis}

\subsection{Overview}

While transitioning from a discrete timestamp dictionary to a continuous functional rotation resolves the granularity and extrapolation limitations of traditional temporal models, it introduces distinct algebraic and computational challenges. This appendix section provides the rigorous theoretical justification for our framework, structured around three core aspects:

\begin{itemize}[leftmargin=1.5em, itemsep=4pt, topsep=4pt, parsep=0pt]
    \item \textbf{Complex vs. Real Representation (\ref{app:math-analysis-complex-vs-real}):} We clarify the relationship between the complex-space formulation in $\mathbb{C}^d$ and its real-space equivalent in $\mathbb{R}^{2d}$, demonstrating why the complex unitary group offers concrete structural advantages for temporal knowledge graphs.
    \item \textbf{Retrieval Reformulation (\ref{app:math-analysis-retrieval-reformulation}):} We prove that the orthogonal structure of the rotation operator allows the continuous temporal transformation to be isolated entirely on the query side, thereby preserving strict compatibility with static vector search indices.
    \item \textbf{Temporal Interpolation (\ref{app:math-analysis-temporal-interpolation}):} We analyse a simplified pairwise setting and give a sufficient condition for a unique, monotonic crossover between historical anchors.
\end{itemize}

%============================================================================================================
\subsection{Complex vs. Real Representation}
\label{app:math-analysis-complex-vs-real}

Our methodology embeds entities in $\mathbb{R}^{2d}$ and interprets them as complex vectors in $\mathbb{C}^d$. These two views are algebraically equivalent: a rotation by angle $\theta$ in $\mathbb{C}^d$ corresponds to applying a block-diagonal orthogonal matrix $R_\theta \in \mathbb{R}^{2d \times 2d}$ composed of $d$ independent $2 \times 2$ rotation blocks.

The complex formulation is not merely notational convenience. As established by RotatE~\citep{DBLP:conf/iclr/SunDNT19} and ChronoR~\citep{sadeghian2021chronor}, expressing relational transformations as element-wise rotations (Hadamard products) in the unitary group $U(1)^d$ naturally captures critical graph structures such as symmetry, antisymmetry, inversion, and composition, while reducing transformation cost from $\mathcal{O}(d^2)$ (general matrix multiplication) to $\mathcal{O}(d)$ (element-wise operations). Our analysis below establishes that these algebraic benefits seamlessly extend to continuous time dynamics.

%============================================================================================================
\subsection{Retrieval Reformulation and Static Index Compatibility}
\label{app:math-analysis-retrieval-reformulation}

In the temporal knowledge graph retrieval setting, evaluating a query $(h,r,?)$ against all candidate tail entities $t \in \mathcal{E}$ under continuous time introduces a severe scalability bottleneck. Naively applying the time-dependent rotation to the entire candidate vocabulary dictates an $\mathcal{O}(N \cdot d)$ dynamic transformation at query time, which fundamentally breaks compatibility with prebuilt static vector search indices and renders large-scale querying computationally intractable. 

We ask whether continuous temporal queries can be evaluated without changing the candidate index. We use the real-space implementation from \S\ref{sec:rotation}, where $R_{\boldsymbol{\theta}_r(\tau)} \in \mathbb{R}^{2d \times 2d}$ is the block-diagonal orthogonal matrix corresponding to $\mathrm{Rot}(\cdot,\boldsymbol{\theta}_r(\tau))$, and $\mathbf{W}_r = \mathrm{diag}(\mathbf{w}_r \odot \hat{\mathbf{w}}_r) \in \mathbb{R}^{2d \times 2d}$ is the relation-specific diagonal scaling matrix. For clarity, we omit the component index $c$ below. The same reformulation applies to each of the $k$ components before their scores are summed. It moves the temporal transformation to the query side, allowing the candidate index to remain unchanged and compatible with Maximum Inner Product Search (MIPS).

\begin{proposition}[Query-Side Reformulation of Temporal Retrieval]
\label{prop:reformulation}
Let $\mathbf{C} \in \mathbb{R}^{N \times 2d}$ be the static matrix of candidate embeddings in the real-space implementation, where the $t$-th row of $\mathbf{C}$ is $\mathbf{e}_t^\top$. For a continuous timestamp $\tau$, suppose the score against candidate $t$ is defined by the standard Euclidean inner product:
$$s_{\mathrm{kge}}((h,r,t)\mid\tau) = \langle \mathbf{W}_r R_{\boldsymbol{\theta}_r(\tau)} \mathbf{e}_h, R_{\boldsymbol{\theta}_r(\tau)} \mathbf{e}_t \rangle$$
where $R_{\boldsymbol{\theta}_r(\tau)}\in\mathbb{R}^{2d\times 2d}$ is a block-diagonal orthogonal matrix representing the phase shift, and $\mathbf{W}_r\in\mathbb{R}^{2d\times 2d}$ is the diagonal relation-specific scaling operator. Then there exists a candidate-independent query vector $\mathbf{q}(\tau) \in \mathbb{R}^{2d}$ defined as:
$$\mathbf{q}(\tau) = R_{-\boldsymbol{\theta}_r(\tau)} \mathbf{W}_r R_{\boldsymbol{\theta}_r(\tau)} \mathbf{e}_h$$
such that the score reduces to:
$$s_{\mathrm{kge}}((h,r,t)\mid\tau) = \langle \mathbf{q}(\tau), \mathbf{e}_t \rangle$$
for all candidates $t$. Consequently, the exact 1-vs-$N$ retrieval reduces to an $\mathcal{O}(d)$ query-side preprocessing step followed by an $\mathcal{O}(N \cdot d)$ static inner-product search over $\mathbf{C}$.
\end{proposition}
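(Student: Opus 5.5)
The plan is to move every operator in the bilinear form onto the head side using the adjoint property of the Euclidean inner product. For any matrix $A\in\mathbb{R}^{2d\times 2d}$, $\langle \mathbf{x}, A\mathbf{y}\rangle = \langle A^\top \mathbf{x}, \mathbf{y}\rangle$. Applying this with $A = R_{\boldsymbol{\theta}_r(\tau)}$, $\mathbf{x}=\mathbf{W}_r R_{\boldsymbol{\theta}_r(\tau)}\mathbf{e}_h$ and $\mathbf{y}=\mathbf{e}_t$ gives
\[
s_{\mathrm{kge}}((h,r,t)\mid\tau) = \bigl\langle R_{\boldsymbol{\theta}_r(\tau)}^\top \mathbf{W}_r R_{\boldsymbol{\theta}_r(\tau)} \mathbf{e}_h,\ \mathbf{e}_t \bigr\rangle .
\]
The only thing left to check is that $R_{\boldsymbol{\theta}_r(\tau)}^\top = R_{-\boldsymbol{\theta}_r(\tau)}$, which identifies the left vector with $\mathbf{q}(\tau)$ as defined in the statement.

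The identity $R_{\boldsymbol{\theta}}^\top = R_{-\boldsymbol{\theta}}$ follows from the block-diagonal structure. Each $2\times 2$ block is
\[
\begin{pmatrix}\cos\theta_j & -\sin\theta_j\\ \sin\theta_j & \cos\theta_j\end{pmatrix},
\]
so its transpose is the rotation by $-\theta_j$. This is also what orthogonality says, $R^\top=R^{-1}$, and the inverse of rotation by $\theta_j$ is rotation by $-\theta_j$. Equivalently, in the complex picture of \S\ref{app:math-analysis-complex-vs-real}, the adjoint of multiplication by $e^{i\boldsymbol{\theta}}$ is multiplication by $e^{-i\boldsymbol{\theta}}$. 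Since $\mathbf{q}(\tau)$ depends only on $\mathbf{e}_h$, $r$ and $\tau$ and not on $t$, the same vector works for every row of $\mathbf{C}$. Stacking the scores then gives $\mathbf{C}\mathbf{q}(\tau)$, a single MIPS call against the unchanged index. This is also the unrotation trick described in Appendix~\ref{app:unrotation}.

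For the complexity claim, I will count operations directly.
\begin{itemize}
\item Applying $R_{\boldsymbol{\theta}}$ costs $\mathcal{O}(d)$: there are $d$ independent $2\times2$ blocks, each with a constant number of multiplications.
\item Applying $\mathbf{W}_r$ costs $2d$ multiplications, since it is diagonal.
\item Applying $R_{-\boldsymbol{\theta}}$ again costs $\mathcal{O}(d)$.
\item Computing the angles $\boldsymbol{\theta}_r(\tau)=s\alpha_r\tau\boldsymbol{\omega}$ costs $\mathcal{O}(d)$ per component once $\alpha_r$ is known.
\end{itemize}
So $\mathbf{q}(\tau)$ is formed in $\mathcal{O}(d)$ time. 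The product $\mathbf{C}\mathbf{q}(\tau)$ costs $\mathcal{O}(Nd)$ and involves no $\tau$-dependent transformation of the $N$ candidates.

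Finally, I will extend the argument to the $k$-component score of Equation~\eqref{eq:scoring-function}. Apply the reformulation to each component $c$ separately and note that the total score is a sum over $c$ of inner products. This equals a single inner product between the concatenated query $(\mathbf{q}^1,\dots,\mathbf{q}^k)$ and the concatenated entity embedding in $\mathbb{R}^{k\times 2d}$, so complexity is $\mathcal{O}(kd)$ preprocessing followed by $\mathcal{O}(Nkd)$ search.

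None of these steps is a real obstacle. The one place needing care is the statement's convention that $\mathbf{W}_r$ acts after the rotation and that the tail is rotated by the same angle. If the tail were rotated by a different angle, the factor would be $R_{\boldsymbol{\theta}_t}^\top R_{\boldsymbol{\theta}_h}$ instead, but it would still lie entirely on the query side. I will also remark that $\mathbf{W}_r$ and $R$ do not commute in general, because $\mathbf{W}_r$ can scale the two real coordinates of a block differently. That is why $\mathbf{q}(\tau)$ stays $\tau$-dependent, consistent with Appendix~\ref{app:triple-loss}. The result itself does not depend on commutation.
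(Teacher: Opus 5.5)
Your proposal is correct and follows essentially the same route as the paper's proof: use the adjoint property to move the rotation onto the head side, identify $R_{\boldsymbol{\theta}_r(\tau)}^\top = R_{-\boldsymbol{\theta}_r(\tau)}$ via orthogonality, and then count an $\mathcal{O}(d)$ query construction plus the $\mathcal{O}(N\cdot d)$ product $\mathbf{C}\,\mathbf{q}(\tau)$. Your explicit $2\times 2$ block check and your extension to the $k$-component sum by concatenation are small additions; the paper only notes that the same reformulation applies to each component before the scores are summed.
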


\begin{proof}
In a direct candidate-side implementation, evaluating the score requires applying the time-dependent rotation $R_{\boldsymbol{\theta}_r(\tau)}$ to every candidate vector $\mathbf{e}_t$ at query time. Although this can be computed in a streaming fashion in $\mathcal{O}(N \cdot d)$ time, this query-time temporal transformation fundamentally prevents the use of any prebuilt static inner-product index.

However, since the rotation matrix $R_{\boldsymbol{\theta}_r(\tau)}$ is orthogonal, its transpose satisfies $R_{\boldsymbol{\theta}_r(\tau)}^\top = R_{-\boldsymbol{\theta}_r(\tau)}$. Utilizing the adjoint property of the Euclidean inner product, we can strictly transfer the rotation from the candidate vector back to the query side:
$$s_{\mathrm{kge}}((h,r,t)\mid\tau) = \langle R_{-\boldsymbol{\theta}_r(\tau)} \big( \mathbf{W}_r R_{\boldsymbol{\theta}_r(\tau)} \mathbf{e}_h \big), \mathbf{e}_t \rangle$$

By substituting the definition of the unrotated query vector $\mathbf{q}(\tau)$, the scoring function trivially simplifies to $s_{\mathrm{kge}}((h, r, t) \mid \tau) = \langle \mathbf{q}(\tau), \mathbf{e}_t \rangle$. We note that the temporal dependence remains nontrivial whenever the relation-specific operator $\mathbf{W}_r$ does not commute with $R_{\boldsymbol{\theta}_r(\tau)}$.

\textbf{Complexity Analysis.}
Constructing the query vector $\mathbf{q}(\tau)$ requires $\mathcal{O}(d)$ trigonometric evaluations and $\mathcal{O}(d)$ structured linear operations, since $R_{\boldsymbol{\theta}_r(\tau)}$ is block-diagonal and $\mathbf{W}_r$ is a diagonal matrix. To evaluate the query against all $N$ candidates simultaneously, we compute the full score vector $\mathbf{s}(\tau) \in \mathbb{R}^N$, where each $t$-th element strictly corresponds to the individual scalar score $s_{\mathrm{kge}}((h,r,t)\mid\tau)$. This is achieved via a single matrix-vector multiplication:
$$\mathbf{s}(\tau) = \mathbf{C} \, \mathbf{q}(\tau)$$
which requires $\mathcal{O}(N \cdot d)$ arithmetic operations. The additional query-time workspace is $\mathcal{O}(d)$, and the retrieval stage requires no candidate-side temporal transformation. This proves the proposition.
\end{proof}

\textit{Remark:} Because the final formulation reduces retrieval to a standard inner-product evaluation over a static candidate matrix $\mathbf{C}$, the model is directly compatible with exact or approximate inner-product search libraries (e.g., FAISS) without requiring index rebuilds across queries.

%============================================================================================================

\subsection{A Stylised Analysis of Pairwise Temporal Interpolation}
\label{app:math-analysis-temporal-interpolation}

A fundamental advantage of formulating time as a continuous rotation operator lies in its structural capacity to interpolate knowledge between historical observations. While exact global retrieval over an entire knowledge graph entails complex multi-dimensional phase interference, we can rigorously demonstrate the model's interpolation mechanics by analysing a stylised pairwise regime. 

\textbf{Trigonometric Isomorphism of the Scoring Function.} 
Before stating the formal proposition, we establish the algebraic bridge between the global inner-product scoring formulation (Equation~\eqref{eq:scoring-function}) and its corresponding trigonometric expansion. Since the total score $s_{\mathrm{kge}}$ is a linear sum of independent dimensional contributions ($s_{\mathrm{kge}} = \sum_j s_j$), we can isolate the temporal dynamics within a single complex dimension. For each $2\text{D}$ rotational subspace $j$, the partial score evaluates a bilinear form:
$$s_j(\tau) = \mathbf{h}_j^\top R_{\theta_j(\tau)}^\top \mathbf{W}_j R_{\theta_j(\tau)} \mathbf{t}_j$$

where $\mathbf{h}_j, \mathbf{t}_j \in \mathbb{R}^2$ are the static entity vectors, $\mathbf{W}_j = \mathrm{diag}(w_{j,1}, w_{j,2})$ is the relation weight matrix, and $\theta_j(\tau)$ is the $j$-th scalar component of the relation-specific rotation vector $\boldsymbol{\theta}_r(\tau)$ (defined in Equation~\eqref{eq:theta_r}). Because the relation weights typically break rotational symmetry ($w_{j,1} \neq w_{j,2}$), expanding $R_{\theta_j(\tau)}^\top \mathbf{W}_j R_{\theta_j(\tau)}$ via double-angle identities yields a linear combination of trigonometric functions:
$$s_j(\tau) = C_j + A_j \cos(2\theta_j(\tau)) + B_j \sin(2\theta_j(\tau))$$
where $C_j, A_j, B_j$ are time-independent constants determined by the entities and relation weights. By applying the harmonic addition theorem, this combination can be exactly re-parameterised as a phase-shifted cosine wave:
$$s_j(\tau) = C_j + \gamma_j \cos(2\theta_j(\tau) - \phi_j)$$
with amplitude $\gamma_j = \sqrt{A_j^2 + B_j^2} > 0$ and phase shift $\phi_j = \operatorname{atan2}(B_j,A_j)$. Since the rotation angle $\theta_j(\tau)$ is linearly proportional to time $\tau$ (Equation~\eqref{eq:theta_r}), summing these independent subspaces across all dimensions reduces the exact relational inner product to a multi-frequency cosine expansion. This form motivates the simplified scoring model below.

In this section, we establish a sufficient condition under which the temporal competition between two consecutive facts exhibits a smooth, monotonic crossover, thereby yielding a deterministic decision boundary for unobserved intermediate timestamps.

\begin{proposition}[Sufficient Condition for Monotone Pairwise Crossover]
\label{prop:interpolation}
Consider a temporal query $(h, r)$ with two mutually exclusive facts, $A$ and $B$, observed at consecutive timestamps $T$ and $T+t$ ($t > 0$). Assume that over the interpolation interval $[T, T+t]$, the expansion of the inner-product scoring function (Equation~\eqref{eq:scoring-function}) for each candidate $c \in \{A, B\}$ is governed by a stylised synchronised cosine expansion:
$$s_c(\tau) = \sum_{j=1}^{d} \gamma_{c,j} \cos\big(\tilde{\omega}_j (\tau - \tau_c)\big), \quad \gamma_{c,j} > 0$$
where $\tau_A = T$ and $\tau_B = T+t$ denote the local phase-alignment peaks. Consistent with the double-angle expansion above and Equation~\eqref{eq:theta_r}, $\tilde{\omega}_j = 2s \cdot \alpha_r \cdot \omega_j$ represents the effective angular velocity, which incorporates the global time scale $s$, inverse frequency $\omega_j$, and relation-specific semantic speed gate $\alpha_r$.

Assume the model accurately reconstructs these historical anchors such that $s_A(T) > s_B(T)$ and $s_A(T+t) < s_B(T+t)$. Provided the effective angular velocities satisfy the strict half-period bound $\tilde{\omega}_j \in (0, \frac{\pi}{t}]$ for all $j$, the pairwise confidence gap $\Delta s(\tau) = s_A(\tau) - s_B(\tau)$ is strictly monotonically decreasing on the open interval $(T, T+t)$. 

Consequently, there exists a unique crossover timestamp $\tau^* \in (T, T+t)$ satisfying $s_A(\tau^*) = s_B(\tau^*)$. For any intermediate time $\tau \in (T, T+t)$, the model strictly prefers $A$ when $\tau < \tau^*$, strictly prefers $B$ when $\tau > \tau^*$, and yields an exact pairwise tie at $\tau^*$.
\end{proposition}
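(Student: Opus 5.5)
The plan is a direct calculus argument on the gap function $\Delta s(\tau) = s_A(\tau) - s_B(\tau)$. Under the stylised expansion in Proposition~\ref{prop:interpolation}, with $\tau_A = T$ and $\tau_B = T+t$, this gap is
$$\Delta s(\tau) = \sum_{j=1}^{d} \gamma_{A,j}\cos\big(\tilde{\omega}_j(\tau - T)\big) - \sum_{j=1}^{d} \gamma_{B,j}\cos\big(\tilde{\omega}_j(\tau - T - t)\big).$$
It is a finite sum of cosines, so it is smooth on all of $\mathbb{R}$, and in particular continuous on $[T, T+t]$ and differentiable on $(T, T+t)$. The first step is therefore to differentiate term by term, which gives
$$\Delta s'(\tau) = -\sum_{j} \gamma_{A,j}\tilde{\omega}_j \sin\big(\tilde{\omega}_j(\tau - T)\big) + \sum_{j} \gamma_{B,j}\tilde{\omega}_j \sin\big(\tilde{\omega}_j(\tau - T - t)\big).$$

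The second step is a sign analysis of each summand, and this is where the half-period bound $\tilde{\omega}_j \in (0, \pi/t]$ is used. Fix $\tau \in (T, T+t)$. Then $\tau - T \in (0, t)$, so $\tilde{\omega}_j(\tau - T) \in (0, \pi)$, and hence every sine in the first sum is strictly positive. Similarly $\tau - T - t \in (-t, 0)$, so $\tilde{\omega}_j(\tau - T - t) \in (-\pi, 0)$, and every sine in the second sum is strictly negative.

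Combined with $\gamma_{c,j} > 0$ and $\tilde{\omega}_j > 0$, every term of $\Delta s'(\tau)$ is strictly negative. Therefore $\Delta s'(\tau) < 0$ on the open interval, and by the mean value theorem $\Delta s$ is strictly decreasing there. Continuity then extends the strict decrease to the closed interval $[T, T+t]$.

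The only delicate point, and the one I expect to need care, is the endpoint case $\tilde{\omega}_j = \pi/t$. There the arguments reach $\pm\pi$ only at the endpoints $\tau = T+t$ and $\tau = T$, where the sine vanishes. Because we work on the open interval, the arguments stay strictly inside $(0,\pi)$ and $(-\pi,0)$, so strict positivity and negativity of the sines still hold. This is precisely why the bound can be closed at $\pi/t$ while the monotonicity conclusion is stated on the open interval.

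The final step is existence and uniqueness of the crossover. The anchor assumptions give $\Delta s(T) > 0$ and $\Delta s(T+t) < 0$. The intermediate value theorem then yields some $\tau^* \in (T, T+t)$ with $\Delta s(\tau^*) = 0$, i.e.\ $s_A(\tau^*) = s_B(\tau^*)$. Strict monotonicity makes this zero unique. It also gives the three-way preference directly: $\Delta s(\tau) > 0$ for $\tau < \tau^*$, so $A$ is strictly preferred; $\Delta s(\tau) < 0$ for $\tau > \tau^*$, so $B$ is strictly preferred; and there is an exact tie at $\tau^*$.
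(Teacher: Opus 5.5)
Your proposal is correct and follows the paper's proof essentially step for step: you differentiate the gap $\Delta s$, use the half-period bound $\tilde{\omega}_j \in (0,\pi/t]$ to fix the signs of the sine terms on the open interval so that $\Delta s'(\tau) < 0$, and then combine the anchor conditions with the intermediate value theorem and strict monotonicity to obtain the unique crossover and the three-way preference. Your explicit use of the mean value theorem and your remark on the boundary case $\tilde{\omega}_j = \pi/t$ spell out two points the paper leaves implicit, but the argument is the same.
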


\begin{proof}
By the assumption of anchor correctness, the relative confidence at the boundaries satisfies $\Delta s(T) > 0$ and $\Delta s(T+t) < 0$.

To analyse the transition mechanics, we differentiate the confidence gap with respect to continuous time $\tau$:
$$\frac{d}{d\tau} \Delta s(\tau) = s_A'(\tau) - s_B'(\tau)$$

The derivatives of the stylised scoring functions are given by:
$$s_A'(\tau) = -\sum_{j=1}^{d} \gamma_{A,j} \tilde{\omega}_j \sin\big(\tilde{\omega}_j (\tau - T)\big)$$
$$s_B'(\tau) = -\sum_{j=1}^{d} \gamma_{B,j} \tilde{\omega}_j \sin\big(\tilde{\omega}_j (\tau - (T+t))\big)$$

For any intermediate time $\tau = T + dt$ with $dt \in (0, t)$, the temporal displacement for candidate $A$ is $dt$. Given $\tilde{\omega}_j \in (0, \frac{\pi}{t}]$, the phase argument $\tilde{\omega}_j dt$ strictly resides in $(0, \pi)$. In this interval, the sine function is strictly positive. Since $\gamma_{A,j} > 0$ and $\tilde{\omega}_j > 0$, it strictly follows that $s_A'(\tau) < 0$. 

Conversely, the temporal displacement for candidate $B$ is $dt - t < 0$. Under the same frequency bound, the phase argument $\tilde{\omega}_j (dt - t)$ strictly resides in $(-\pi, 0)$, where the sine function is strictly negative. This renders $s_B'(\tau) > 0$.

Therefore, the derivative of the pairwise gap is strictly negative across the entire open interval:
$$\frac{d}{d\tau} \Delta s(\tau) < 0 \quad \forall \tau \in (T, T+t)$$

This strict monotonicity, coupled with the continuity of the scoring functions on $[T, T+t]$ and the boundary conditions, guarantees via the Intermediate Value Theorem (IVT) the existence of exactly one root $\tau^*$ in $(T, T+t)$ where $\Delta s(\tau^*) = 0$.
\end{proof}

\textit{Remark:} The proposition gives a sufficient condition for pairwise interpolation; it does not guarantee a single crossover for the complete learned spectrum or the full candidate set. Under the half-period bound, the two candidates cross once between consecutive anchors. The crossover $\tau^*$ need not occur at the midpoint $T+t/2$, because its position depends on the amplitudes $\gamma_{A,j}$ and $\gamma_{B,j}$. Higher frequencies may violate the bound and produce local oscillations, as shown in Appendix~\ref{app:qualitative}.

%% file: appendices/qualitative-examples.tex
\section{Qualitative Analysis: Geometric Shadowing in Action}
\label{app:qualitative}

To make the geometric shadowing mechanism concrete, we present scoring traces from a controlled experiment.
We select a small subset of real temporal triples from ICEWS05-15~\citep{garcia-duran-etal-2018-learning}, including 4 \texttt{(Obama, Consult, Blair)} facts timestamped between June 2007 and April 2008, and 6 \texttt{(Obama, Consult, Xi Jinping)} facts timestamped between June 2013 and September 2015, along with static facts (\texttt{born in}) and auxiliary relation slots (\texttt{Make a visit}, \texttt{Express intent to cooperate}).
These additional facts are included to provide sufficient training signal for the shared entity embeddings and frequency spectrum while training on the two competing facts alone leaves the model severely underconstrained, producing degenerate oscillations. We train a \ourapproach-ChronoR model on this subset with the bundled pretrained gate and sweep the query timestamp $\tau_q$ to observe how candidate scores evolve continuously.

\subsection{Dynamic Relation: Score Crossover}

\begin{figure*}[htbp]
    \centering
    \includegraphics[width=0.75\linewidth]{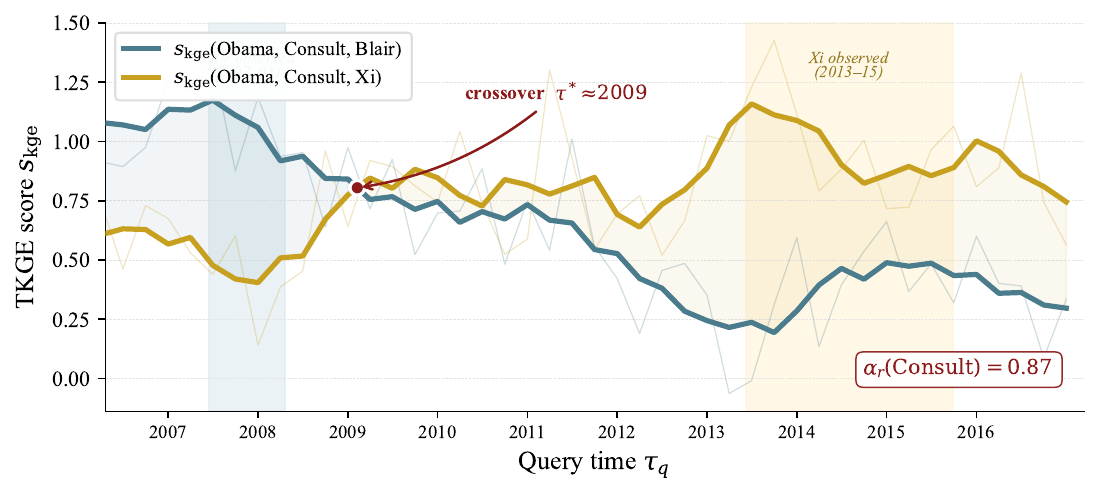}
    \caption{Scores for the competing slot \texttt{(Obama, Consult, ?)}. The light curves are the raw quarterly scores used for retrieval, and the bold curves are 5-quarter rolling averages shown only to make the overall trend easier to see. The blue and yellow regions mark the observation periods for Blair (2007-2008) and Xi (2013-2015), respectively. The red dot marks where the smoothed curves cross. The gate assigns a high rotation speed to ``Consult'' ($\alpha_r=0.87$).}
    \label{fig:score-crossover}
\end{figure*}

Consider the relational slot \texttt{(Barack Obama, Consult, ?)} with two competing tail entities: \textit{Tony Blair} (4 observations, 2007-06 to 2008-04) and \textit{Xi Jinping} (6 observations, 2013-06 to 2015-09). The pretrained Semantic Speed Gate assigns $\alpha_r(\texttt{Consult}) = 0.87$, which is consistent with the rapidly changing participants of this relation.

Figure~\ref{fig:score-crossover} shows the TKGE scores $s_{\mathrm{kge}}$ as the query time $\tau_q$ moves from 2007 to 2016. Blair has the higher smoothed score at the beginning of this period, but his score gradually falls as Xi's rises. The smoothed curves cross around 2009, after which Xi \emph{shadows} Blair. Proposition~\ref{prop:interpolation} guarantees a unique crossover only when its half-period condition holds for every frequency component.

Crucially, neither fact is deleted: both remain in the append-only memory, and the rotation operator continuously modulates their alignment with the query time.

\paragraph{On raw oscillations and multiple crossings.}
The raw quarterly scores (light curves in Figure~\ref{fig:score-crossover}) oscillate and cross more than once. Thus, not every learned frequency in this example satisfies the sufficient condition in Proposition~\ref{prop:interpolation}. The proposition guarantees monotonicity only when all frequencies satisfy the half-period bound $\tilde{\omega}_j \leq \pi/t$.
Two factors explain these fluctuations.
First, the model learns a spectrum of $kd$ frequency components, and higher-frequency components (those with period shorter than the ${\sim}5$-year observation gap) violate this bound, producing local oscillations.
Second, entity embeddings are shared across all relation slots.
Obama's embedding is jointly trained on \texttt{Consult}, \texttt{Make a visit}, and \texttt{born in} facts, so temporal dynamics from other slots introduce additional phase interference into the \texttt{Consult} scores. For instance, Blair's raw score briefly rises around 2010-2011 before resuming its decline, and Xi's score dips near 2015 before recovering.

The 5-quarter rolling average summarises the overall trend and produces a single crossover of the form analysed in Proposition~\ref{prop:interpolation}. The raw scores show that the proposition is not a guarantee for the complete learned model. The rolling average is used only in this figure, not during retrieval.

\subsection{Gate Inspection: Learned Relational Volatility}

Table~\ref{tab:gate-values} reports $\alpha_r$ for representative relations that either appeared during ICEWS05-15 pretraining (\textit{seen}) or were evaluated only after pretraining (\textit{unseen}). The values follow the expected volatility ordering without manual static/dynamic labels. These examples are qualitative and do not measure calibration over all relations.

\begin{table}[ht]
\centering
\small
\caption{Pretrained semantic speed gate values. Higher $\alpha_r$ = faster rotation. \textit{Seen}: appeared in ICEWS05-15 during pretraining; \textit{Unseen}: zero-shot via text embedding similarity.}
\label{tab:gate-values}
\begin{tabular}{lcc}
\toprule
\textbf{Relation} & $\alpha_r$ & \textbf{Category} \\
\midrule
\multicolumn{3}{l}{\textit{Seen during gate pretraining (ICEWS05-15)}} \\
\midrule
Consult                      & 0.87 & Dynamic  \\
Host a visit                 & 0.86 & Dynamic  \\
Engage in negotiation        & 0.63 & Dynamic  \\
Sign formal agreement        & 0.53 & Dynamic  \\
Cooperate economically       & 0.16 & Low      \\
Cooperate militarily         & 0.09 & Low      \\
\midrule
\multicolumn{3}{l}{\textit{Unseen (zero-shot via text embeddings)}} \\
\midrule
met with                     & 0.71 & Dynamic  \\
visited                      & 0.64 & Dynamic  \\
negotiated with              & 0.62 & Dynamic  \\
CEO of                       & 0.44 & Moderate \\
capital of                   & 0.36 & Moderate \\
species                      & 0.22 & Static   \\
citizen of                   & 0.17 & Static   \\
\bottomrule
\end{tabular}
\end{table}

A notable property of this result is that ICEWS05-15 contains \textit{few semantically static relations}. 
All 251 relation types describe political events (consulting, visiting, threatening, etc.), lacking permanent properties like ``born in'' or ``species''.
Across 461K facts spanning 4{,}017 timestamps, the average $(h,r)$ slot has 3.08 distinct tail entities, which means every relation exhibits temporal variation.\footnote{57 of 251 relations technically have single-tail slots, but these are all rare event types ($\leq$42 facts each) that appear static mostly due to data sparsity, not semantic permanence (e.g., ``Attempt to assassinate,'' ``Demand mediation'').} Yet the gate still learns a meaningful volatility gradient within this event-driven spectrum: episodic interactions like ``Consult'' (0.87) and ``Host a visit'' (0.86) receive high $\alpha_r$, while sustained state-level conditions like ``Cooperate militarily'' (0.09) and ``Cooperate economically'' (0.16) receive lower values.

The unseen examples show that the gate transfers beyond the relation strings observed during pretraining. Episodic predicates receive higher values (``met with'' at 0.71 and ``visited'' at 0.64), while persistent properties receive lower values (``citizen of'' at 0.17 and ``species'' at 0.22). This transfer comes from using text embeddings rather than relation IDs. We evaluate only the relations and domains reported here, so these examples do not establish that the gate is calibrated for arbitrary predicates.

The DMR-MSC results in Table~\ref{tab:dmrmsc_sub} provide a separate preservation check: temporal reranking does not reduce performance on this largely non-temporal conversational benchmark.